\definecolor{lightgray}{gray}{0.9}
\newcommand{\mytitle}{
On the Robustness of Adversarial Training Against Uncertainty Attacks
}
\newcommand*{\tinyimg}[1]{%
    \raisebox{-.1\baselineskip}{%
        \includegraphics[
        height=0.6\baselineskip,
        width=0.6\baselineskip,
        keepaspectratio,
        ]{#1}%
    }%
}
\newcommand{\tinypedicted}{\tinyimg{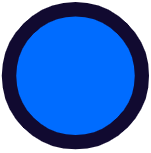}}
\newcommand{\tinygroundtruth}{\tinyimg{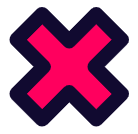}}
\newcommand{\tinyoverconfidence}{\tinyimg{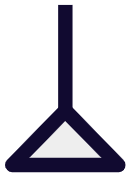}}
\newcommand{\tinyunderconfidence}{\tinyimg{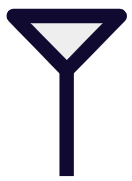}}
\newcommand{\myparagraph}[1]{\noindent \textbf{#1} --}
\newcommand{\SOA}{SoA\xspace}
\newcommand{\ie}{i.e.,\xspace}
\newcommand{\eg}{e.g.,\xspace}
\newcommand{\sece}{s-ECE\xspace}
\newcommand{\mus}{MUS\xspace}
\newcommand{\msus}{MSUS\xspace}
\newcommand{\cifargroupA}{$\diamondsuit$\xspace}
\newcommand{\cifargroupAvar}{$\diamondsuit^\prime$\xspace}
\newcommand{\cifargroupB}{$\clubsuit$\xspace}
\newcommand{\imagenetgroupA}{$\heartsuit$\xspace}
\newcommand{\imagenetgroupB}{$\spadesuit$\xspace}
\newcommand{\imagenetgroupBvar}{$\spadesuit^\prime$\xspace}
\newcommand{\param}{\ensuremath{\vct{\theta}}}
\newcommand{\robustparam}{\ensuremath{\vct{\theta}_{R}}}
\newcommand{\target}{\ensuremath{\vct t}}
\newcommand{\targetover}{\ensuremath{\vct t^{\downarrow}}}
\newcommand{\targetunder}{\ensuremath{\vct t^{\uparrow}}}
\newcommand{\deltaover}{\ensuremath{\vct \delta^{\downarrow}}}
\newcommand{\deltaunder}{\ensuremath{\vct \delta^{\uparrow}}}
\newcommand{\unc}[1]{\ensuremath{\mathcal{U}(#1)}}
\newcommand{\uncover}{\ensuremath{\mathcal{U}^{\downarrow}}}
\newcommand{\uncunder}{\ensuremath{\mathcal{U}^{\uparrow}}}
\newcommand{\vct}[1]{\ensuremath{\boldsymbol{#1}}}
\newcommand{\set}[1]{\ensuremath{\mathcal{#1}}}
\newcommand{\advset}[1]{\ensuremath{\mathcal{#1}_{\textrm{adv}}}}
\newcommand{\argmax}{\operatornamewithlimits{\arg\,\max}}
\newcommand{\argmin}{\operatornamewithlimits{\arg\,\min}}
\newcommand{\bucket}[1]{\mathbf{B}_{#1}}
\newcommand{\Loss}{\mathcal{L}}
\newcommand{\advLoss}{\mathcal{L}^{\prime}}
\newcommand{\trainset}{\ensuremath{\set D}\xspace}  
\newcommand{\x}{\vct{x}\xspace}
\newcommand{\deltaadv}{\vct{\delta}^{\prime}\xspace}
\newcommand{\partition}[1]{\mathbb{P}_{#1}}
\newcommand{\suchthat}{\; | \;}
\newcommand{\bigX}{\mathcal{X}}
\newcommand{\bigY}{\mathcal{Y}}
\newcommand{\datagendistrib}{\mathcal{P}(\bigX,\bigY)}
\newcommand{\classA}{\textrm{A}}
\newcommand{\classB}{\textrm{B}}
\newcommand{\epsball}{\set{B}_\epsilon}
\newtheorem{theorem}{Theorem}[section]
\newtheorem{lemma}{Lemma}[section]
\newcommand{\cifar}{CIFAR-10\xspace}
\newcommand{\cifarh}{CIFAR-100\xspace}
\newcommand{\mnist}{MNIST\xspace}
\newcommand{\imagenet}{ImageNet\xspace}
\newcommand{\atkit}{150\xspace}
\newcommand{\NumRobustBenchModels}{23\xspace}
\newcommand{\linf}{$l_\infty$\xspace}
\newcommand{\robustbench}{\cite{croce2021robustbench}\xspace}
\newcommand{\engstromname}{Engstrom2019Robustness\xspace}                               
\newcommand{\addepallieffname}{Addepalli2022Efficient\_RN18\xspace}                     
\newcommand{\gowalimprovingname}{Gowal2021Improving\_28\_10\_ddpm\_100m\xspace}         
\newcommand{\gowalimprovingduename}{Gowal2021Improving\_70\_16\_ddpm\_100m\xspace}      
\newcommand{\wangbettername}{Wang2023Better\_WRN-28-10\xspace}                          
\newcommand{\wangbetterduename}{Wang2023Better\_WRN-70-16\xspace}                       
\newcommand{\sehwagrobustname}{Sehwag2021Proxy\_R18\xspace}                             
\newcommand{\sehwagrobustduename}{Sehwag2021Proxy\_ResNest152\xspace}                   
\newcommand{\rebuffifixingname}{Rebuffi2021Fixing\_70\_16\_cutmix\_extra\xspace}        
\newcommand{\kangstablename}{Kang2021Stable\xspace}                                     
\newcommand{\pengrobustname}{Peng2023Robust\xspace}                                     
\newcommand{\addepallitowname}{Addepalli2021Towards\_RN18\xspace}                       
\newcommand{\cuidecoupledname}{Cui2023Decoupled\_WRN-28-10\xspace}                      
\newcommand{\xuexploringname}{Xu2023Exploring\_WRN-28-10\xspace}                        
\newcommand{\pangrobustnessname}{Pang2022Robustness\_WRN70\_16\xspace}                  
\newcommand{\engstromimagenetname}{Engstrom2019Robustness\xspace}           
\newcommand{\salmanname}{Salman2020Do\_R18\xspace}                          
\newcommand{\salmanduename}{Salman2020Do\_R50\xspace}                       
\newcommand{\wongfastname}{Wong2020Fast\xspace}                             
\newcommand{\liuswinbname}{Liu2023Comprehensive\_Swin-B\xspace}             
\newcommand{\liuswinlname}{Liu2023Comprehensive\_Swin-L\xspace}             
\newcommand{\liuconvnbname}{Liu2023Comprehensive\_ConvNeXt-B\xspace}        
\newcommand{\liuconvnlname}{Liu2023Comprehensive\_ConvNeXt-L\xspace}        
\newcommand{\engstromarch}{ResNet-50\xspace}
\newcommand{\addepallieffarch}{ResNet-18\xspace}
\newcommand{\gowalimprovingarch}{WideResNet-28-10\xspace}
\newcommand{\gowalimprovingduearch}{WideResNet-70-16\xspace}
\newcommand{\wangbetterarch}{WideResNet-28-10\xspace}
\newcommand{\wangbetterduearch}{WideResNet-70-16\xspace}
\newcommand{\sehwagrobustarch}{ResNet-18\xspace}
\newcommand{\sehwagrobustduearch}{ResNet-152\xspace}
\newcommand{\rebuffifixingarch}{WideResNet-70-16\xspace}
\newcommand{\kangstablearch}{WideResNet-70-16\xspace}
\newcommand{\pengrobustarch}{RaWideResNet-70-16\xspace}
\newcommand{\addepallitowarch}{ResNet-18\xspace}
\newcommand{\cuidecoupledarch}{WideResNet-28-10\xspace}
\newcommand{\xuexploringarch}{WideResNet-28-10\xspace}
\newcommand{\pangrobustnessarch}{WideResNet-70-16\xspace}
\newcommand{\engstromimagenetarch}{ResNet-50\xspace}
\newcommand{\salmanarch}{ResNet-18\xspace}
\newcommand{\salmanduearch}{ResNet-50\xspace}
\newcommand{\wongfastarch}{ResNet-50\xspace}
\newcommand{\liuswinbarch}{Swin-B\xspace}
\newcommand{\liuswinlarch}{Swin-L\xspace}
\newcommand{\liuconvnbarch}{ConvNeXt-B\xspace}
\newcommand{\liuconvnlarch}{ConvNeXt-L\xspace}
\newcommand{\engstrom}{\cite{robustness}\xspace}
\newcommand{\engstromID}{C1\xspace}
\newcommand{\addepallieff}{\cite{addepalli2022efficient}\xspace}
\newcommand{\addepallieffID}{C2\xspace}
\newcommand{\gowalimproving}{\cite{gowal2021improving}\xspace}
\newcommand{\gowalimprovingID}{C3\xspace}
\newcommand{\gowalimprovingIDdue}{C4\xspace}
\newcommand{\wangbetter}{\cite{wang2023better}\xspace}
\newcommand{\wangbetterID}{C5\xspace}
\newcommand{\wangbetterIDdue}{C6\xspace}
\newcommand{\sehwagrobust}{\cite{sehwag2021robust}\xspace}
\newcommand{\sehwagrobustID}{C7\xspace}
\newcommand{\sehwagrobustIDdue}{C8\xspace}
\newcommand{\rebuffifixing}{\cite{rebuffi2021fixing}\xspace}
\newcommand{\rebuffifixingID}{C9\xspace}
\newcommand{\kangstable}{\cite{kang2021stable}\xspace}
\newcommand{\kangstableID}{C10\xspace}
\newcommand{\pengrobust}{\cite{peng2023robust}\xspace}
\newcommand{\pengrobustID}{C11\xspace}
\newcommand{\addepallitow}{\cite{addepalli2022scaling}\xspace}
\newcommand{\addepallitowID}{C12\xspace}
\newcommand{\cuidecoupled}{\cite{cui2023decoupled}\xspace}
\newcommand{\cuidecoupledID}{C13\xspace}
\newcommand{\xuexploring}{\cite{xu2023exploring}\xspace}
\newcommand{\xuexploringID}{C14\xspace}
\newcommand{\pangrobustness}{\cite{pang2022robustness}\xspace}
\newcommand{\pangrobustnessID}{C15\xspace}
\newcommand{\engstromimagenetID}{I1\xspace}
\newcommand{\salman}{\cite{salman2020adversarially}\xspace}
\newcommand{\salmanID}{I2\xspace}
\newcommand{\salmanIDdue}{I3\xspace}
\newcommand{\wongfast}{\cite{wong2020fast}\xspace}
\newcommand{\wongfastID}{I4\xspace}
\newcommand{\liu}{\cite{liu2023comprehensive}\xspace}
\newcommand{\liuswinb}{I5\xspace}
\newcommand{\liuswinl}{I6\xspace}
\newcommand{\liuconvnb}{I7\xspace}
\newcommand{\liuconvnl}{I8\xspace}
\begin{document}

\title{\mytitle\\
{\footnotesize Preprint, submitted to Pattern Recognition}
\thanks{Submitted to Pattern Recognition.\\ * Corresponding author}
}

\author{\IEEEauthorblockN{Emanuele Ledda$^{1,*}$}
\and
\IEEEauthorblockN{Giovanni Scodeller$^3$}
\IEEEauthorblockA{}
\and
\IEEEauthorblockN{Daniele Angioni$^2$}
\IEEEauthorblockA{}
\and
\IEEEauthorblockN{Giorgio Piras$^2$}
\IEEEauthorblockA{}
\and
\IEEEauthorblockN{Antonio Emanuele Cin\`a$^3$}
\IEEEauthorblockA{}
\and
\IEEEauthorblockN{Giorgio Fumera$^2$}
\IEEEauthorblockA{}
\and
\IEEEauthorblockN{Battista Biggio$^{1,2}$}
\IEEEauthorblockA{}
\and
\IEEEauthorblockN{Fabio Roli$^{1,2,3}$}
\IEEEauthorblockA{}

\IEEEauthorblockA{\textit{(1) CINI, Consorzio Interuniversitario Nazionale per per l'Informatica} \\
Rome 00185, Italy}

\IEEEauthorblockA{\textit{(2) Department of Electric and Electronic Engineering, University of Cagliari} \\
Cagliari 09100, Italy}

\IEEEauthorblockA{\textit{(3) Department of Informatics, Bioengineering, Robotics, and Systems Engineering, University of Genova} \\
Genova 16146, Italy}
}

\maketitle

\begin{abstract}
In learning problems, the noise inherent to the task at hand hinders the possibility to infer without a certain degree of uncertainty. 
Quantifying this uncertainty, regardless of its wide use, assumes high relevance for security-sensitive applications.
Within these scenarios, it becomes fundamental to guarantee good (\ie trustworthy) uncertainty measures, which downstream modules can securely employ to drive the final decision-making process.
However, an attacker may be interested in forcing the system to produce either (i) highly uncertain outputs jeopardizing the system's availability or (ii) low uncertainty estimates, making the system accept uncertain samples that would instead require a careful inspection (\eg human intervention). 
Therefore, it becomes fundamental to understand how to obtain robust uncertainty estimates against these kinds of attacks.
In this work, we reveal both empirically and theoretically that defending against adversarial examples, \ie carefully perturbed samples that cause misclassification, additionally guarantees a more secure, trustworthy uncertainty estimate under common attack scenarios without the need for an ad-hoc defense strategy.
To support our claims, we evaluate multiple adversarial-robust models from the publicly available benchmark RobustBench on the CIFAR-10 and ImageNet datasets.
\end{abstract}



\begin{IEEEkeywords}
uncertainty quantification, adversarial machine learning, neural networks


\end{IEEEkeywords}



\section{Introduction}
In recent years, the presence of Machine Learning (ML) has become increasingly widespread, mainly owing to its powerful predictive abilities. 
However, the use of ML models, such as any system learning from data, is tied to uncertainty. 
The presence of variability in the data observations, the limit on the number of observations, and, more generally, the presence of noise on the task at hand make models uncertain~\cite{hullermeier2021aleatoric}. 
Thus, it has become fundamental to represent uncertainty in a reliable and informative way through the use of Uncertainty Quantification (UQ) methods, enabling an aware decision-making process~\cite{McAllister2017Autonomous}.

The role of UQ techniques assumes a highly pivotal role in security-sensitive applications: in such scenarios, in fact (\eg medical diagnostics~\cite{guo2024uctnet, wei2025fingrained} and autonomous driving~\cite{McAllister2017Autonomous}), the tolerance on the error margin must be, by definition, minimized and strictly confined.
Leveraging UQ techniques, however, ML-based systems can also base their output on the uncertainty estimated on the given input, helping minimize errors and enabling an accurate and reliable decision-making process throughout the ML systems' pipelines.

Nevertheless, according to a recent promising line of research, the uncertainty estimates can also be subject to attacks aiming to maliciously tamper with the uncertainty~\cite{galil2021disrupting, zeng2022outdomain, Ledda_2023_ICCV, obadinma2024calibration}. 
Specifically, an attacker can be interested in jeopardizing the availability of the ML-based system, typically by increasing the uncertainty of input samples to ``flood" the decision-making system. 
Conversely, by targeting the system's integrity, an attacker can be interested in decreasing the uncertainty to make the system accept inputs that would otherwise need to undergo the decision-making system, \eg a human in the loop.  

Within the same security-related contexts, however, offspring of an instead now mature line of research, ML models are also often required to be resistant against adversarial inputs, \ie carefully crafted input samples aiming to deceive the classification performed by the model~\cite{szegedy2014intriguing, biggio2013evasion}. 
Following an arms race in the security of ML, the literature has flourished, proposing multiple defense approaches and attack algorithms~\cite{biggio2018wild}. 
Among the State-Of-the-Art (\SOA) defense approaches, Adversarial Training (AT)~\cite{madry2017towards} has been shown to be the reference technique by also being the pillar of more recent approaches, and its effectiveness is deeply and continuously studied against the \SOA adversarial attacks~\cite{croce2021robustbench}.

In contrast, for attacks against uncertainty, little to no work has advanced the study of defensive approaches, rather than focusing on different ways to disrupt the UQ techniques. 
We thus questioned, starting from the principle of evaluating existing approaches first, the effectiveness of AT against attacks on uncertainty.
In our work, we reveal, through theoretical analysis and empirical investigation, that models trained to be robust against adversarial attacks can be likewise robust against attacks to uncertainty; we summarize the main theoretical findings of our analysis in ~\autoref{fig:graphical_abstract}.
\begin{figure}[tb]
    \centering
    \includegraphics[width=\linewidth]{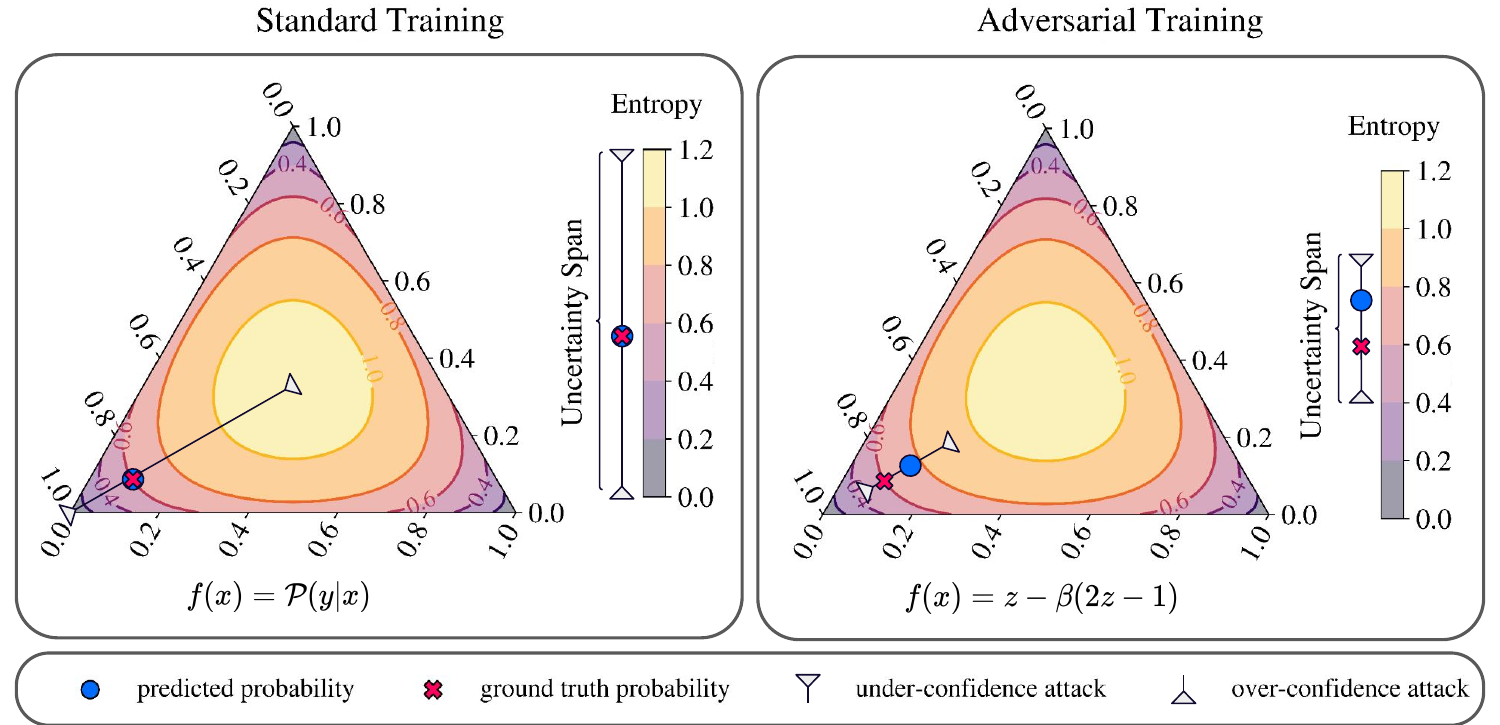}
    \caption{An illustration of the effect of standard (left) and adversarial (right) training on the Uncertainty Span (\ie the range of uncertainty values one can obtain by perturbing a sample with a given budget $\epsilon$) for a classification problem visualized through a standard 3-dimensional simplex.
    While for standard training, the predicted probability \tinypedicted~converges on the ground truth probability \tinygroundtruth, for adversarial training, it results in less confidence, \ie has higher entropy than the ground truth probability.
    As a result, adversarial training reduces the uncertainty span, whose lower and upper bounds are, respectively, an over-confidence \tinyoverconfidence~and under-confidence \tinyunderconfidence~attack.
    For further details, see ~\autoref{sec:method} and ~\autoref{sec:theory}.}
    \label{fig:graphical_abstract}
\end{figure}

In \autoref{sec:bg}, we display the background concepts of uncertainty quantification and adversarial ML; then, in \autoref{sec:method}, starting from the state of the art, we exemplify a modular framework for dealing with adversarial attacks targeting uncertainty, which is functional to the theoretical analysis on the robustness of adversarial trained models of \autoref{sec:theory}.
Successively, in \autoref{sec:experiments}, we empirically demonstrate our results by considering, on multiple datasets, highly robust \SOA models from the publicly available RobustBench repository.
Finally, in \autoref{sec:conclusions}, we outline our findings and discuss the potential limitations of the study at hand.
%
%
\section{Background}
\label{sec:bg}

\subsection{Uncertainty Quantification}
\label{sec:bg.uq}
Uncertainty Quantification (UQ) is an emerging ML field devoted to measuring uncertainty (usually, the uncertainty associated with models' predictions)~\cite{hullermeier2021aleatoric}.
This uncertainty, commonly known as \emph{predictive uncertainty}, can be decomposed into two distinct sources~\cite{hullermeier2021aleatoric}: (i) \textbf{aleatoric uncertainty}: which refers to the inherent randomness of the decision process; (ii) \textbf{epistemic uncertainty}: caused by a lack of knowledge.
An example of aleatoric uncertainty is the ambiguity when trying to discriminate the hand-written digit ``5'' from the letter ``S''.
In some extreme cases, the similarities between the strokes composing these two symbols can make them even indistinguishable.
Unlike the previous example, epistemic uncertainty has no relation to any inherent randomness of the process: for example, imagine determining if a mushroom is poisonous by its appearance.
Without proper knowledge of the target domain, one would be uncertain in determining if a given mushroom would be poisonous; however, knowledge acquisition reduces the uncertainty of the predictions (the reason for which we also refer to this source as ``reducible uncertainty'').

\myparagraph{Bayesian approach to uncertainty quantification}
One solid approach for quantifying these two sources of uncertainty consists of estimating a posterior distribution on the set of the model's parameters $\param$ given the data $\set{D}$:
\begin{equation}
    p(\param|\set{D}) = \frac{p(\set{D|\param}) \cdot p(\param)}{p(\set{D})} .
\end{equation}
From this distribution, one can obtain a prediction by Bayesian Model Averaging (BMA), \ie by computing the expected value of the prediction's posterior $p(y|\vct{x},\param)$ given the parameters $\param$ with respect to the parameter's posterior $p(\param|\set{D})$:
\begin{equation}
    f^{BMA}(\x) = p(y|\x,\set{D}) = \int_{\Theta} p(y|\x,\param) \cdot p(\param|\set{D}) \; \partial \theta ,
\label{eq:bma}
\end{equation}
where $\Theta$ is the set of all the possible parameters.
For obtaining uncertainties associated directly with the model's prediction, the standard approach consists of using a second-order probability over the prediction's Bayesian posterior: by convention, the first-order moment of this distribution is ``believed" to embed information about the aleatoric uncertainty, while the second order moment is \emph{directly} associated to epistemic uncertainty.

A standard approach consists of using the entropy of the predictive vector as a measure of aleatoric uncertainty.
Practically, for classification problems, such a solution acts directly on the predicted softmax vector encoding the prediction $f(\x)$:
\begin{equation}
    H(f(\x)) \coloneqq -\sum_{c=1}^{|\set{Y}|} f_{c}(\x) \log{f_{c}(\x)},
\end{equation}
where $f_c(\x)$ denotes the probability value the model assigns to the class $c$, with $f(\x)$ which can be obtained either by using a Bayesian model $f(\x) \coloneqq f^{BMA}(\x)$ or a deterministic one $f(\x) \coloneqq f^{\theta}(\x)$ parametrized with $\theta$.
Correctly modeled entropies should convey calibration properties, \ie when the model assigns a probability value of $p$ to an event (\eg belonging to a specific class), it actually occurs with a probability of $p$.
The standard metric for quantifying this behavior takes the name of Expected Calibration Error (ECE)~\cite{guo2017calibration}, which is a discrete expectation of the absolute difference between the expected ${ex}(\bucket{s})$ and the observed ${ob}(\bucket{s})$ probability computed on a set of $S$ buckets $\bucket{s}$:
\begin{equation}
\text{ECE} = \sum^S_{s=1} \frac{|\bucket{s}|}{|\set{D}|} |{ob}(\bucket{s})-{ex}(\bucket{s})|
\label{eq:ece}
\end{equation}
where each $\bucket{s} \in \set{D}$ contains all the samples having confidence (\ie $\arg \max f(\x)$) within the interval $[\frac{s}{S}, \frac{s+1}{S}]$, and ${ex}(\bucket{s})=\frac{2s+1}{2S}$ is the average over the interval.
For measuring epistemic uncertainty, a common approach employs the variance of the predictions: $\mathbb{V}(f^{BMA}(\x)) \coloneqq \mathbb{E}[(f^{BMA}(\x))^2] - \mathbb{E}[(f^{BMA}(\x))]^2$.

\subsection{Adversarial Attacks}
\label{sec:bg.adv}
In a security-sensitive application, another priority, together with reliable uncertainty estimates, is to guarantee a certain level of robustness in a model that can be the target of attackers aiming at disrupting its normal functioning.
This necessity brought to light what is known as \textit{adversarial machine learning}, an ML field that keeps exploring the attack surfaces of an ML system to propose new countermeasures to achieve robustness~\cite{biggio2018wild}.
In this regard, most of the research community efforts focused on attacks jeopardizing the integrity of a model, where an attacker is interested in changing a model's prediction at test time~\cite{biggio2013evasion, szegedy2014intriguing}.
In such an attack scenario, the goal of the attacker is to craft a small perturbation $\deltaadv$ such that, once added to an input sample $\x$, it induces the target model $f^{\param}$ to output a wrong prediction, \ie $f^{\param}(\x + \deltaadv) \neq f^{\param}(\x)$.
Here, the perturbation is typically constrained to a limited domain $\epsball = \{\forall \vct \delta\ : ||\vct \delta||_p \leq \epsilon\}$, where $||.||_p$ is an arbitrary $L_p$ norm (\eg $L_1$, $L_2$ or $L_\infty$) and $\epsilon$ is the perturbation budget indicating the limits of the attacker's power.

\myparagraph{Optimizing the adversarial perturbation}
To find the optimal perturbation $\deltaadv$ that maximizes the probability of success for the attacker, one can solve the following optimization problem:
\begin{equation}
\label{eq:adv_objective}
    \deltaadv = \argmin_{\vct \delta \in \epsball} \advLoss(\vct t, f^{\param}(\x + \vct \delta))
\end{equation}

where $\vct t$ is the one-hot encoding of the target label, which can be either any label different than the ground truth or a specific target class chosen by the attacker, and $\advLoss$ is the attacker's objective, \eg the negative cross-entropy between the prediction and the target or the logit $f_k(\x + \deltaadv)$ corresponding to the correct class.

\myparagraph{Defending against adversarial attacks}
Currently, the most promising approach to protect a model against adversarial attacks is to proactively anticipate the attacker by including them during training. Such a procedure, which takes the name of \textit{adversarial training} (AT)~\cite{madry2017towards}, aims at finding the optimal parametrization $\param^\prime$ that ideally protects from all possible perturbations $\vct \delta \in \epsball$.
This strategy can be formalized as the following minimax problem:
\begin{equation}
\label{eq:advtraining}
\min_{\param \in \Theta}
\mathbb{E}_{\vct x, y \sim \trainset}
\left[
\max_{\vct \delta \in \epsball} \Loss(f^{\param}(\x + \vct \delta), y)
\right]
\end{equation}
where $\set{L}$ is the objective that minimize classification error (\eg the cross-entropy loss).
Here, \autoref{eq:advtraining} is composed of (i) an \textit{inner maximization} problem to find, for each sample, the perturbation $\vct \delta$ that achieves high loss, and (ii) an \textit{outer minimization} problem to find the parameters $\param$ that minimize this worst-case loss.
%
%
\section{Adversarial Attacks Against Uncertainty Quantification}  
\label{sec:method}
From the perspective of the most typical test-time evasion attacks, adversarial machine learning characterizes an attacker interested in making the model misclassify the prediction on the input adversarial example.
In the context of Uncertainty Quantification, instead, the focus shifts from ``deteriorating predictions'' to ``deteriorating the uncertainty measure''.
The correct functioning of an uncertainty measure implies that the most likely correct predictions have a low uncertainty, which, in contrast, becomes high when the predictions are plausibly incorrect. 
Therefore, while predictions and uncertainty represent two different measures, they are, however, affine: the uncertainty estimate measures the likelihood that the given prediction is wrong. 
As we will see in the following subsection, state-of-the-art work conceived uncertainty adversarial attacks specifically to spoil this statistical correlation.


\subsection{Previous Work}
\label{sec:method.approaches}
Galil and El-Yaniv~\cite{galil2021disrupting} are the first to consider attacks to disrupt the uncertainty measure.
Specifically, they consider using such a measure for building a classifier with a rejection option.
They aim to subvert the rejection decision (\ie to increase the probability of rejecting correctly classified samples and decrease the probability of rejecting the misclassified ones) without changing the predicted labels so that a user would not notice the attack.
To do so, they selectively increase or decrease the confidence assigned to each sample \textit{depending on the model's prediction before the attack} by using the Maximum Softmax Probability (MSP) as loss function $\ell(\x) = f_{\hat{y}}(\x)$, where $\hat{y}$ is the predicted class:
\begin{equation}
\begin{aligned}
\argmin_{\vct \delta \in \epsball} \quad & \gamma \cdot f_{\hat{y}}(\x+\vct \delta) \\
\textrm{s.t.} \quad & f_{\hat{y}}(\x+\vct \delta) \ge f_{c}(\x+\vct \delta), \quad \forall c \in \set{Y}
\end{aligned}
\label{eq:galil_opt}
\end{equation}
where $\gamma=-1$ is used for misclassified samples and $\gamma=1$ for correctly the classified ones.
Zeng et al.~\cite{zeng2022outdomain} model an attacker interested in making Out-of-Distribution (OOD) samples appear as in-distribution samples.
To do so, they propose to minimize the cross-entropy between the adversarial prediction $f(\x_{out} + \vct \delta)$ for an OOD sample $\x_{out}$ and the target $\target$ representing the one-hot encoding of the predicted label $\hat{y}$:
\begin{equation}
\begin{aligned}
\argmin_{\vct \delta \in \epsball} \quad & H(f(\x_{out} + \vct \delta), \target) 
\end{aligned}
\label{eq:zeng_opt}
\end{equation}
It is worth noting that both the techniques mentioned above~\cite{galil2021disrupting,zeng2022outdomain} design an attacker who has access to prior knowledge about input annotations.
In the former, the ground truth is crucial to determine when to increase or decrease the confidence, while, in the latter, the attacker needs to know which samples are OOD (\ie implicitly not belonging to any known class) to perform the attack.

In our previous work~\cite{Ledda_2023_ICCV}, we proposed an alternative scenario in which the attacker is interested in increasing or decreasing the predictive uncertainty of all the input samples.
The ability to damage predictive uncertainty without having specific information about the labels is an attractive feature, especially if such a measure is the input of a downstream module or is given to a human.
For instance, imagine a system employing the uncertainty measure of an ML model for determining if the clinical data of a patient is sufficient to make a diagnosis: without knowledge of this specific domain (thus without the labels), an attacker may craft a perturbation for increase indiscriminately the uncertainty associated to the patient's data.
Such an attack may overload the processing pipeline of a hospital, causing the system to require more analysis, even when unnecessary.
%
The proposed attack scenario takes the name of \textit{over-confidence} (when decreasing the uncertainty) or \textit{under-confidence} attack (when increasing the uncertainty).
While for the over-confidence attack, we kept the same optimization process of ~\autoref{eq:zeng_opt}, the under-confidence case has been investigated only theoretically, without an actual loss function implementation.

Obadinma et al.\cite{obadinma2024calibration} propose to harm the model's calibration, disrupting the functioning of any downstream module or human using the uncertainty measure.
For doing so, they present four different attacks, two of which coincide with the formulation we proposed in ~\cite{Ledda_2023_ICCV}, \ie (i) over-confidence and (ii) under-confidence attack.
They further propose (iii) \textit{maximum miscalibration attack} (MMA), where they increase or decrease the confidence of each sample following the same criteria of ~\cite{galil2021disrupting}, and (iv)
\textit{random confidence attack} (RCA), where they extend the latter but with a randomized confidence goal for each input, producing a more natural confidence distribution that, although being a less effective attack, results in being more stealthy.
Based on these types of attacks against model's calibration, they propose two ad-hoc defenses: (i) \textit{Calibration Attack Adversarial Training} (\textit{CAAT)}, in which they perform adversarial training using samples generated with maximum miscalibration attacks, and (ii) \textit{Compression Scaling} (\textit{CS}), a post-processing technique where they heuristically distribute low scores to higher range.
To the best of our knowledge, this is the only work that proposes a defense strategy against uncertainty attacks.

To the best of our knowledge, all state-of-the-art approaches rely exclusively on aleatoric uncertainty estimates derived from deterministic models, whereas epistemic uncertainty in the context of adversarial robustness has been explored solely in our prior work~\cite{Ledda_2023_ICCV}.
This common practice is not a deliberate choice, but rather a consequence of the unpopularity of Bayesian models among the adversarial training community, which makes robust models capable of providing principled epistemic measures hard to find.

\subsection{A Modular Unified Framework for Uncertainty Attacks}
\label{sec:method.framework}
Interestingly, all the state-of-the-art works build an optimization process with a proper loss formulation for accomplishing an attacker's goal.
This means that proving the model's robustness against a specific goal does not directly translate to the same robustness guarantees on other goals.
For example, analyzing the effect of an attack on OOD uncertainty minimization does not provide any information on the effectiveness against indiscriminate uncertainty maximization, making it difficult to compare models in terms of uncertainty robustness.
Accordingly, we propose a modular approach for unifying the diverse attacks proposed in the literature, which, as a result, also unifies the notion of robustness.
In general, all the possible instances of the attacker's goal have a shared characteristic: they all need to modify the uncertainty measure associated with the model's predictions; more sensitive uncertainty variations generally lead to less robustness against uncertainty attacks.
%
As a result, any possible uncertainty attack discussed in literature (\eg Maximum Miscalibration~\cite{obadinma2024calibration}, Random Confidence~\cite{obadinma2024calibration}, and OOD camoufladge~\cite{galil2021disrupting}) can be traced back to a combination of over- and under-confidence attacks.
Let us take random confidence as an example: knowing which is the maximum and minimum attainable uncertainty value under attack for a specific set of samples directly influences the success rate of the random confidence attack (the higher the possible excursion, the higher the space of action of the attack).
As another example, given that for succeeding in a maximum miscalibration attack it is necessary to selectively increase or decrease the uncertainty of different samples, the attack success rate depends upon the robustness of the models against under- and over-confidence attacks.

Therefore, instead of focusing on specific instances of the attacker’s goal, we \textit{estimate the range of uncertainty values an attacker can reach} when perturbing each input $x$ on the dataset $\set{D}$.
This approach provides a comprehensive overview of the model’s vulnerability under uncertainty manipulations, encompassing all possible attack scenarios.
%
Concretely, given an attacker capable of perturbing a sample $\x$ inside a domain $\epsball$, we can distinguish between two different cases: (i) decreasing the uncertainty (\ie over-confidence attack) and (ii) increasing the uncertainty (\ie under-confidence attack).
These objectives can be achieved, respectively, by minimising or maximising the uncertainty measure $\unc{f^{\param}(\x)}$:
\begin{equation}
\label{eq:unc_generic_obj}
     \underbrace{\deltaover =\argmax_{\vct \delta \in \epsball} \; \unc{f^{\param}(\x + \vct \delta)}}_{\textbf{Over-confidence Attack}} 
    \quad 
    \underbrace{\deltaunder =\argmin_{\vct \delta \in \epsball} \; \unc{f^{\param}(\x + \vct \delta)}}_{\textbf{Under-confidence Attack}}
\end{equation}
Here, $\deltaover$ represents the perturbation for which the attacker maximizes the uncertainty (under-confidence attack) while $\deltaunder$ represents the perturbation for which it is minimized (over-confidence attack).
Starting from them, we can compute the lower $\uncover \coloneq \unc{f^{\param}(\x + \deltaover)}$ and upper $\uncunder \coloneq \unc{f^{\param}(\x + \deltaunder)}$ bound of what we call the \textbf{Uncertainty Span} (US), \ie all the uncertainty values an attacker can obtain when perturbing the sample $\x$ for fooling a model parameterized by $\param$ given a perturbation budget $\epsilon$.
With such an approach, one can easily derive the robustness to specific uncertainty attacks, \eg by selectively choosing the upper/lower bound of the uncertainty range~\cite{galil2021disrupting, zeng2022outdomain} or even intermediate values~\cite{obadinma2024calibration}.

In accordance with the state of the art, we use the entropy of the prediction vector $H(f^{\param}(\x))$~\cite{hullermeier2021aleatoric} as our uncertainty measure $\unc{\cdot}$.
To instantiate an over-confidence attack, the objective is to minimize the entropy of the output probability.
To do so, it is sufficient to force the model to output $f^{\param}_k(\x + \deltaover) = 1$ for an arbitrary $k \in \set Y$.
%
%
We know from \cite{Ledda_2023_ICCV} that the most straightforward strategy for the attacker is to maximize the probability of the class corresponding to the predicted class, as it already has the highest probability among all the classes, as in ~\autoref{eq:zeng_opt}; this attack takes the name of Stabilizing Attack (STAB) since it stabilizes the model's prediction by enforcing the most likely class~\cite{Ledda_2023_ICCV}.
To achieve this objective, one can employ the cross-entropy loss, setting as target $\targetover$ the one-hot-encoding of the predicted label $\hat{y}$:
\begin{equation}
\label{eq:oatk_objective}
    \argmin_{\deltaover \in \epsball} H(\targetover, f^{\param}(\x+\deltaover)) = 
    \argmin_{\deltaover \in \epsball} - \log(f^{\param}_{\hat{y}}(\x + \deltaover)),
\end{equation}
which finds its minimum when $f_{\hat{y}}(\x + \deltaover) = 1$.
In the case of under-confidence attacks, we carry out a similar approach by using the cross-entropy: 
to this end - since the entropy reaches its maximum when the output is a uniform vector - we set the target $\targetunder = \frac{1}{c} \cdot \mathbbm{1}$ to be a $c$-dimensional vector with all elements equal to $\frac{1}{c}$, where $c$ is the total number of classes:
\begin{equation}
\label{eq:uatk_objective}
    \argmin_{\deltaunder \in \epsball} H(\targetunder, f^{\param}(\x+\deltaunder)) = 
    \argmin_{\deltaunder \in \epsball} -\frac{1}{c}\sum_{k=1}^c \log(f^{\param}_k(\x + \deltaunder)).\\
\end{equation}
%
Finally, for a thorough evaluation, we must extend the notion of the Uncertainty Span (a sample-wise measure) onto an entire data set.
To this aim, we propose to compute general statistics by taking the mean of the uncertainty spans\footnote{By abuse of notation, we refer to Uncertainty Span for indicating both the interval and the interval size.} (MUS):
\begin{equation}
\label{eq:mus}
    \text{MUS} = 
    \mathbb{E}_{(\x,y) \sim \trainset}
    \Biggl[
            H(f^{\param}(\x + \deltaunder)) - H(f^{\param}(\x + \deltaover))
    \Biggr] \,
\end{equation}
We can also penalize larger gaps by employing a \emph{squared} mean (MSUS):
\begin{equation}
\label{eq:msus}
    \text{MSUS} = 
    \mathbb{E}_{(\x,y) \sim \trainset}
    \Biggl[
            \Bigl( 
                H(f^{\param}(\x + \deltaunder)) - H(f^{\param}(\x + \deltaover))
            \Bigr) ^{2}
    \Biggr] \,
\end{equation}
%
%
\section{Theoretical Analysis on Adversarial Uncertainty Robustness}  
\label{sec:theory}

Despite much literature on the attacker side, only one work dealt with adversarial robustness against uncertainty attacks, proposing two ad hoc strategies.
In contrast to them, driven by the flourishing literature on defenses against prediction attacks (\ie traditional adversarial examples), before thinking of ad hoc strategies, we ask ourselves if and to what extent such defenses, particularly adversarial training, may protect even against uncertainty attacks.
Accordingly, our main research question is:
\begin{tcolorbox}
Do \textbf{adversarial trained} models disclose robust characteristics also \textbf{against attacks targeting uncertainty}? 
\end{tcolorbox}
A recent work~\cite{obadinma2024calibration} likewise raised this observation, but it is still limited to empirically analyzing a small set of models and only against specific attack instances.
To this aim, instead of considering specific attack instances, we answer our research question by analyzing the Uncertainty Span of adversarial trained models, \ie consider both the under- and over-confidence cases, in accordance with ~\autoref{sec:method}, drawing \emph{general} conclusions on these models' robustness against \emph{any} attack instance.
Since the predictive entropy of a sample increases when approaching the decision boundary, it is reasonable to think that adversarial training may also protect against under-confidence attacks: in fact, adversarial training forces the model to be robust to label-flips (\ie to samples crossing the boundary), hindering any potential attacker interested in increasing uncertainty.
For over-confidence attacks instead, we cannot draw similar conclusions: indeed, being robust to label flips does not guarantee an entropy reduction since the regions where the entropy is minimized do not lie on the decision boundary.

Until now, we have provided a qualitative and intuitive discussion; now, let us move on to a formal, theoretical analysis.

\subsection{On the Convergence of Adversarial Trained Models}
\label{sec:method.proof}
For simplicity, we consider a binary classification problem:
let $\datagendistrib$ be a generative distribution; sampling from this distribution results in obtaining a pair $(\x,y)$ characterized by an observation $\x \in \mathbb{R}^d$ and a target binary label $y \in \{A,B\}$.
Let then $\set{D}$ be a collection of i.i.d. pairs $(\x,y)$ obtained sampling the generative distribution $\datagendistrib$:
\begin{equation}
    \set{D} = \{(\x_i, y_i) \sim \datagendistrib, \; i = [1, \cdots, N] \} ,
\end{equation}
Note that for the law of large numbers, when the cardinality of the data set $|\set{D}|$ increases, the probability $p(y|\x,\set{D})$ conditioned to the data set tends to $p(y|\x)$:
\begin{equation}
\label{eq:largenumbers}
    \lim_{|\set{D}| \rightarrow \infty} p(y|\x,\set{D}) =
    p(y|\x) .
\end{equation}
\autoref{eq:largenumbers} represents the ideal condition for our theoretical analysis: considering the edge case of a large data set, we can filter out any source of epistemic uncertainty and, thus, study the effect of aleatoric uncertainty in isolation.

\myparagraph{Uncertainty convergence of standard training}
In a binary classification problem, the standard approach consists of using the cross-entropy loss $\ell(\x, y) = H(\vct{t}, f^{\param}(\x))$, corresponding to the entropy between the target $\vct{t}$ (representing the one-hot encoding of $y$) and the output of the classifier $f^{\param}(\x)$ parameterized with $\param$ evaluated on $\x$.
The empirical loss corresponds to the sum of the losses for each data point $\mathcal{L}(\set{D}) = \frac{1}{|\set{D}|} \sum_{i=1}^{|\set{D}|} \ell(\x_i, y_i)$.
Therefore, to minimize $\mathcal{L}(\set{D})$, one have to find a suitable parametrization $\theta^* = \arg \min_{\theta} \mathcal L(\mathcal D)$ minimizing the loss.

\begin{lemma}
\label{lemma:clean}
    A classifier $f^{\param}$ with parameters $\param$ minimizes the loss $\mathcal{L}(\set{D})$ when, for all $(\x,y) \in \set{D}$, it holds $f^{\param}(\x)=p(y|\x)$
\end{lemma}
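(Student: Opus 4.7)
The plan is to reduce the minimization of the empirical loss to a pointwise minimization over the conditional distribution $p(y \mid \vct x)$, and then invoke the Gibbs inequality (equivalently, the non-negativity of the KL divergence). I will rely essentially on \autoref{eq:largenumbers}, which allows me to pass from the empirical average over $\set{D}$ to an expectation under $\datagendistrib$ in the large-data regime the lemma implicitly assumes.

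First, I would rewrite the empirical loss by grouping samples according to their input $\vct{x}$:
\begin{equation}
\mathcal{L}(\set{D})
= \frac{1}{|\set{D}|} \sum_{i=1}^{|\set{D}|} H(\vct{t}_i, f^{\param}(\vct{x}_i))
= -\frac{1}{|\set{D}|} \sum_{i=1}^{|\set{D}|} \log f^{\param}_{y_i}(\vct{x}_i).
\end{equation}
Appealing to \autoref{eq:largenumbers}, as $|\set{D}| \to \infty$ this empirical average converges to $\mathbb{E}_{(\vct{x},y) \sim \datagendistrib}\!\left[-\log f^{\param}_{y}(\vct{x})\right]$, which by the tower property factors as $\mathbb{E}_{\vct{x}}\!\left[\,\mathbb{E}_{y \sim p(\cdot|\vct{x})}[-\log f^{\param}_{y}(\vct{x})]\,\right]$. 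Because the outer expectation is a non-negative weighted average over $\vct{x}$, any $\param$ that minimizes the inner integrand pointwise also minimizes the whole expression.

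Next, I would treat the inner expectation as a function of the vector $f^{\param}(\vct{x})$ and apply the standard cross-entropy decomposition
\begin{equation}
\mathbb{E}_{y \sim p(\cdot|\vct{x})}\!\left[-\log f^{\param}_{y}(\vct{x})\right]
= H\bigl(p(\cdot|\vct{x})\bigr) + \mathrm{KL}\bigl(p(\cdot|\vct{x}) \,\|\, f^{\param}(\vct{x})\bigr).
\end{equation}
The first term does not depend on $\param$, and the second is non-negative with equality if and only if $f^{\param}(\vct{x}) = p(\cdot|\vct{x})$. Hence the pointwise minimum is attained exactly when the predicted distribution matches the true posterior, which is precisely the statement of the lemma. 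For the binary setting the paper works in, the same conclusion follows immediately by setting the derivative of $-p(y=\mathrm{A}|\vct{x}) \log q - (1-p(y=\mathrm{A}|\vct{x})) \log(1-q)$ with respect to $q = f^{\param}_{\mathrm{A}}(\vct{x})$ to zero.

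The main obstacle I foresee is conceptual rather than technical: the lemma is phrased over a fixed finite dataset $\set{D}$, yet the condition $f^{\param}(\vct x) = p(y|\vct x)$ only becomes a true minimizer in the asymptotic regime permitted by \autoref{eq:largenumbers} (otherwise the empirical minimizer just memorizes the observed label frequencies, which coincide with $p(y|\vct x)$ only in the limit, or under the implicit assumption of sufficient expressivity and unique $\vct x$ values). I would therefore state explicitly, at the start of the proof, that the argument is carried out under the large-sample regime of \autoref{eq:largenumbers}, so that the pointwise Gibbs argument applies without caveats. Everything else is routine manipulation of cross-entropy and KL divergence.
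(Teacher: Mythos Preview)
Your proposal is correct and follows essentially the same route as the paper: reduce to a pointwise problem by grouping (or conditioning) on the input $\vct{x}$, then invoke the Gibbs inequality to conclude that the cross-entropy is minimized when the predicted distribution equals the true conditional. The paper carries this out concretely in the binary case by partitioning $\set{D}$ into subsets $\set{D}_{\vct{x}}$ and writing the per-subset loss as $H(\vct z, \vct \alpha)$, which it simply notes is minimized at $\vct z = \vct \alpha$---exactly your KL/Gibbs step, and your explicit remark about relying on \autoref{eq:largenumbers} makes transparent what the paper leaves implicit.
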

\begin{proof}
Let $\partition{\set{D}} = \{\set{D}_{\x_0}, \dots \set{D}_{\x_{K}}\}$
be a partition of $\set{D}$ where each element $\set{D}_{\x_i} \coloneq \{(\x,y) \in \set{D} \suchthat \x=\x_i\}$ represents the set of all the samples having feature vector $\x_i$.
Since minimizing the loss for each $\set{D}_{\x} \in \partition{\set{D}}$ results in minimizing $\mathcal{L}(\set{D})$, we discuss the minimization of a single generic subset $\set{D}_{\x}$.
Such subset contains either elements belonging to class $A$ or $B$: accordingly, let the one hot encoded targets be $\vct{t_A}\coloneq[1,0]^\intercal$ and $\vct{t_B}\coloneq[0,1]^\intercal$.
Let then $\vct z=[z, 1-z]^\intercal$ be the vector associated to the categorical distribution of $p(y|\x)$, \ie a vector where $z$ and $1-z$ equal respectively $p(A|\x)$ and $p(B|\x)$.
If a classifier $f^{\param}(x)$ evaluated on $x$ outputs a probability vector $\vct{\alpha}\coloneq[\alpha, 1-\alpha]^\intercal$, one can expand the loss term on the subset $\set{D}_x$ as follows:
\begin{align*}
    \mathcal{L}(\set{D}_{\x}) &=
    \underbrace{p(\classA|\x)}_z \cdot \underbrace{H(\vct{t_A}, f^{\param}(\x))}_{- \log (\alpha)} + \underbrace{p(\classB|\x)}_{1-z} \cdot \underbrace{H(\vct{t_B}, f^{\param}(\x))}_{- \log (1-\alpha)} \\
    &= \quad -z \cdot \log (\alpha) - (1-z) \log (1-\alpha) = H(\vct z, \vct \alpha).
\end{align*}
Finally, since $H(\vct z, \vct \alpha)$ is minimized when $\vct z = \vct \alpha$, one can conclude that the loss $\mathcal{L}(\set{D})$ on the entire data set is minimized when this condition holds for all $(\x,y) \in \set{D}$ in the data set.
\end{proof}

\autoref{lemma:clean} represents a well-known result of probabilistic machine learning~\cite{Blasiok2023}, stating that the optimal classifier should be perfectly calibrated.
This phenomenon should be considered when dealing with aleatoric uncertainty because it asserts that lower entropy does not always go along with more accurate classifiers.
From this starting point, we will now show that such conditions slightly change in the presence of an attacker.

\myparagraph{Uncertainty convergence of adversarial training}   
Unlike standard training, the data we use during the minimization problem of adversarial training changes when the weight configuration changes: this happens because the perturbations applied to the input samples used during training depend directly upon the weights $\param^{(t)}$ at each iteration $t$.
From this perspective, one can extend the notation used with clean data to construct a dataset containing adversarial examples for training the model.
Let $\set{D}$ be the clean data set, and let $\advset{D}$ be the adversarial data set built upon $\set{D}$:
\begin{equation}
    \advset{D}^{(t)} = \{(\x+\delta^{(t)}, y) \suchthat (\x,y) \sim \datagendistrib\},
    \label{eq:advset}
\end{equation}
where $\delta^{(t)}$ is an adversarial perturbation applied to $\x$, obtained by optimizing \autoref{eq:advtraining} with the parameters $\param^{(t)}$ at iteration $t$.
Since we are considering a binary classification problem, one has to distinguish two cases for the same $\x$: if $y=A$, then \autoref{eq:advtraining} is minimized when $f^{\param}(x+\delta^{(t)})$ equals $\vct{t_B}\coloneq[0,1]^\intercal$, whereas if $y=B$, it is minimized when $f^{\param}(x+\delta^{(t)})$ equals $\vct{t_A} \coloneq [1,0]^\intercal$.

The minimization-maximization process leads to an equilibrium where the classifier would not improve its robustness by updating its parameters $\param^{(t)}$, because even though any update may eventually fix some vulnerabilities, it would at the same time expose the model to new ones.
Therefore, for each $(\x,y)$, this equilibrium theoretically bounds the loss an attacker can achieve when minimizing $p(y=c|\x)$ for a given target $c \in \set{Y}$.
Specifically, if we let the attack strength for the target class $c$ be $\beta_c = \frac{{\|f^{\param(\x)}_c - f^{\param(\x+\delta)}_c\|}_2}{\sqrt{2}}$, the equilibrium would be described by the following lemma:
\begin{lemma}
An adversarial trained classifier $f^{\robustparam}$ with robust parameters $\robustparam$, minimizes the loss $\mathcal{L}^{\robustparam}(\advset{D})$ when, for all $(\x+\delta,y) \in \advset{D}$ it holds $\alpha = z -\beta (2  z - 1)$, where $\beta = \max (\beta_A, \beta_B)$.
\label{lemma:adv}
\end{lemma}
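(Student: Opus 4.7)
My plan is to mirror the partition argument used in \autoref{lemma:clean}, extending it to the min-max equilibrium of \autoref{eq:advtraining}. Following the same partition of the clean dataset, I would analyze the adversarial loss on the perturbed version of each $\set{D}_\x$, since minimizing the total loss reduces to minimizing each per-partition contribution. On a fixed partition, writing $f^{\robustparam}(\x) = [\alpha, 1-\alpha]^\intercal$, the attacker can reduce the true-class probability of A-labeled points by $\beta_A$ and of B-labeled points by $\beta_B$. Substituting these perturbed outputs into the per-class cross-entropy expansion of \autoref{lemma:clean} yields
\begin{equation*}
    \mathcal{L}^{\robustparam} = -z \log(\alpha - \beta_A) - (1 - z) \log(1 - \alpha - \beta_B).
\end{equation*}

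Next, I would decouple the defender's and attacker's roles. Freezing $(\beta_A, \beta_B)$ and solving the first-order condition $\partial \mathcal{L}^{\robustparam}/\partial \alpha = 0$ yields
\begin{equation*}
    \frac{z}{\alpha - \beta_A} = \frac{1 - z}{1 - \alpha - \beta_B},
\end{equation*}
which rearranges cleanly to $\alpha = z + \beta_A - z(\beta_A + \beta_B)$. Substituting $\beta_A = \beta_B = \beta$ in this expression immediately collapses it to the claimed $\alpha = z - \beta(2z - 1)$, so the remaining task is to justify that both attack strengths must coincide at equilibrium.

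I expect this final step to be the main obstacle. The intuition is a symmetry argument: if the attacker could extract a strictly larger shift on one side of the decision boundary, the defender would reduce the worst-case loss by translating $\alpha$ toward the weaker side, after which the attacker would in turn sharpen the previously weaker attack until the two strengths balance, contradicting equilibrium. Making this precise likely requires tying $\beta_c$ back to the $\epsilon$-ball geometry via its definition as the normalized coordinate shift, and arguing that the ball admits the same maximal shift in either direction at the fixed point of adversarial training; the $\max$ in $\beta = \max(\beta_A, \beta_B)$ then simply selects the common value. Once this symmetry is established, repeating the pointwise argument across every partition $\set{D}_\x$ propagates the condition to all $(\x+\delta, y) \in \advset{D}$, completing the proof.
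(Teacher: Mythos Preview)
Your partition setup and loss expansion match the paper exactly, and your algebra is correct. The divergence is in how you collapse the two attack strengths. The paper does not argue that $\beta_A = \beta_B$ at equilibrium; instead, it substitutes the worst case $\beta = \max(\beta_A,\beta_B)$ \emph{before} differentiating, writing
\[
\mathcal{L}^{\robustparam}(\advset{D}^{\x}) = - z \log (\alpha-\beta) - (1-z)\log (1-\alpha-\beta),
\]
and then takes $\partial \mathcal{L}/\partial\alpha = 0$ to obtain $\alpha = z - \beta(2z-1)$ directly. Since the lemma \emph{defines} $\beta$ as this maximum, the substitution is simply adopting the statement's hypothesis, not a claim to be proved.

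Your route---differentiate first, then force $\beta_A=\beta_B$ via a symmetry/equilibrium argument---is more ambitious but also more fragile. You correctly flag this as the main obstacle, and indeed it is not needed: the lemma does not assert the strengths coincide, only that the optimum is characterized by their maximum. If the symmetry argument cannot be made precise (and the $\epsilon$-ball geometry you gesture at does not obviously force equal shifts in both coordinate directions for an arbitrary classifier), your formula $\alpha = z + \beta_A - z(\beta_A+\beta_B)$ would not reduce to the claimed one. The paper's shortcut sidesteps this entirely at the cost of a ``for simplicity, consider the worst case'' step that is itself somewhat informal, but it delivers the stated conclusion without the extra burden you have taken on.
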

\begin{proof}
Let $\partition{\advset{D}} = \{\advset{D}^{\x_0}, \dots ,\advset{D}^{\x_{K}}\}$ be a partition of $\advset{D}$ where $\advset{D}^{\x} \coloneq \{(\x+\delta,y) \in \advset{D} \suchthat (\x,y) \in \set{D}_{\x}\}$.
One can compute the loss on this partition as follows:
\begin{align*}
    \mathcal{L}^{\robustparam}(\advset{D}^{\x}) 
    &= \underbrace{p(\classA|\x+\delta_{\classA})}_z \cdot \underbrace{H(\vct{t_A}, f^{\param}(\x+\delta_{\classA}))}_{- \log (\alpha-\beta_A)} + \\
    &+ \underbrace{p(\classB|\x+\delta_{\classB})}_{1-z} \cdot \underbrace{H(\vct{t_B}, f^{\param}(\x+\delta_{\classB}))}_{- \log (1-\alpha-\beta_B)} \\
    &= - z \cdot \log (\alpha-\beta_A) - (1-z) \cdot \log (1-\alpha-\beta_B).
\end{align*}
Note that, for simplicity, one can consider the worst case by taking as attack strength $\beta = \max (\beta_A, \beta_B)$; in this configuration, we can write the loss as follows:
\begin{equation}
    \mathcal{L}^{\robustparam}(\advset{D}^{\x}) = - z \cdot \log (\alpha-\beta) - (1-z) \cdot \log (1-\alpha-\beta).
\end{equation}
Contrary to \autoref{lemma:clean}, when using adversarial training, the loss does not equal a cross-entropy between two distinct components.
Therefore, we need to compute the values for which the partial derivative with respect to $\alpha$ equals zero:
\begin{align*}
    \implies \frac{\partial \mathcal{L}}{\partial \alpha} = 0
    \implies \frac{1-z}{1-\alpha-\beta} - \frac{z}{\alpha-\beta} = 0\\
    \implies \alpha = z - \beta \cdot (2z-1)
\end{align*}
\end{proof}

\subsection{Demystifying the Underconfidence of Robust Models}
From \autoref{lemma:clean} and \autoref{lemma:adv}, it is possible to derive that, in the presence of an attacker, a classifier inevitably deviates from the trajectory of the optimization process that it would have followed in a standard, clean optimization process.
Interestingly, the entropy of an adversarial-trained classifier is higher compared to standard training.
We can easily prove this inequality starting from the two lemmas:
\begin{theorem}
    \label{theorem:adv_entropy_inequality}
    Let $f^{\param_S}$ and $f^{\param_R}$ be two classifiers trained respectively with standard and adversarial training on the data set $\set{D}$.
    In the presence of an attacker with attack strength $\beta \geq 0$, for all $(x,y) \in \set{D}$ holds that:
    \begin{equation}
        H(f^{\param_S}(x)) \leq H(f^{\param_R}(x))
    \end{equation}
\end{theorem}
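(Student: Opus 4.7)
The plan is to leverage the two preceding convergence lemmas to obtain the explicit form of both classifiers at equilibrium and then reduce the inequality to a one-dimensional geometric fact about the binary entropy function. First, I would invoke \autoref{lemma:clean} to write $f^{\param_S}(x) = [z,\,1-z]^\intercal$ with $z = p(\classA \mid x)$, and \autoref{lemma:adv} to write $f^{\param_R}(x) = [\alpha,\,1-\alpha]^\intercal$ with $\alpha = z - \beta(2z-1)$. The theorem then reduces to showing $h(z) \le h(\alpha)$, where $h(p) := -p\log p - (1-p)\log(1-p)$ denotes the binary entropy function.

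Next, I would exploit that $h$ is strictly concave on $[0,1]$, symmetric about $1/2$, and attains its maximum at $1/2$; hence $h$ is monotone in the distance from $1/2$, so it suffices to prove $|\alpha - 1/2| \le |z - 1/2|$. A short algebraic manipulation gives
\begin{equation*}
\alpha - \tfrac{1}{2} \;=\; (z - \tfrac{1}{2}) - \beta(2z-1) \;=\; (1 - 2\beta)\,(z - \tfrac{1}{2}),
\end{equation*}
so $|\alpha - 1/2| = |1 - 2\beta|\,|z - 1/2|$. The inequality therefore reduces to $|1-2\beta| \le 1$, equivalently $\beta \in [0,1]$, and the conclusion follows from the monotonicity of $h$ with respect to the distance from $1/2$.

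The main obstacle I foresee is not analytic but modelling: I need to justify that the relevant range of $\beta$ is indeed contained in $[0,1]$. Since $\beta_c$ is defined as the $\ell_2$ distance between two points of the probability simplex rescaled by $1/\sqrt{2}$, the diameter of that simplex forces $\beta_c \in [0,1]$, and consequently $\beta = \max(\beta_A, \beta_B) \in [0,1]$ as well. Equality $h(z) = h(\alpha)$ occurs only in the two degenerate regimes $\beta = 0$ (no attack, so both training schemes coincide pointwise) and $\beta = 1$ (the adversary induces a full label flip, yielding the mirror distribution $[1-z, z]$ which has the same entropy); for every intermediate attack strength the inequality is strict. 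Lifting the pointwise bound to a dataset-level statement requires no extra argument, since both lemmas are already stated for every $(x,y) \in \set{D}$.
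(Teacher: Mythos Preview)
Your proof is correct and takes essentially the same approach as the paper: both invoke the two convergence lemmas to obtain the Bernoulli parameters $z$ and $\alpha = z - \beta(2z-1)$, then argue that the binary entropy at $\alpha$ dominates that at $z$. Your version is more careful---making the monotonicity in $|\,\cdot - 1/2\,|$ explicit and justifying $\beta \in [0,1]$ from the simplex-diameter bound---whereas the paper simply asserts that the entropy, viewed as a function of the offset $o=\beta(2z-1)$, attains its minimum at $o=0$.
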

\begin{proof}
From \autoref{lemma:clean}, we know $f^{\param_S}(x) = [z,1-z]^\intercal$, while from \autoref{lemma:adv} we know that in the presence of an attacker, the optimum deviates from an offset amount of $o=\beta  (2  z - 1)$: $f^{\param_R}(x) = [z-o,1-z+o]^\intercal$.
Within this interval, the entropy $H(f^{\param_R}(x))$ reaches its minimum when the offset equals zero and, thus, when $f^{\param_S}(x)=f^{\param_R}(x)$; therefore, $H(f^{\param_S}(x)) \leq H(f^{\param_R}(x))$.
\end{proof}

From \autoref{theorem:adv_entropy_inequality}, we can draw some conclusions on the robustness against uncertainty attacks by studying the effect of adversarial training on the Uncertainty Span.
Albeit ~\autoref{theorem:adv_entropy_inequality} proves the tendency of adversarial training to produce underconfident models, it remains factual the robustness of such models against \textit{traditional} adversarial examples, \ie crafted to induce a misclassification on a sample~\cite{madry2017towards}.
In order to succeed, the adversarial example requires crossing the decision boundary; in a binary classification problem, such conditions correspond precisely to reaching the maximum possible entropy.
Consequently, reducing the number of samples for which a traditional adversarial example exists (\ie increasing the robust accuracy) directly implies reducing the upper bound of the uncertainty span.
Concerning the robustness against over-confidence attacks, findings from ~\autoref{theorem:adv_entropy_inequality} suggest greater robustness against such attacks as well.
This theorem shows that adversarial-trained models output predictions with higher entropies than undefended models.
Suppose we aim to decrease it until reaching a desired value of $v$: in that case, the initially higher entropy of the adversarial trained model makes it more challenging to lower it to $v$, as the starting values are already higher and, therefore, need a significant reduction.

\myparagraph{Discussion}
In safe environments (without any attacker), a model whose purpose is to solve a classification problem takes advantage of correctly modeling the true posterior $p(y|\x)$ for any given input sample.
Nevertheless, in the presence of an attacker, the classifier tends to balance good generalization capabilities and security guarantees.
Such a balance compromises not only the model's accuracy (as we already know from previous literature) but also comes at the price of sacrificing a certain amount of confidence in the predictions.
As a result, adversarial-trained models disclose robust characteristics against uncertainty attacks, providing theoretical support for the claim that adversarial training makes models more resilient to over- and under-confidence attacks.
%
%
\section{Experimental Analysis}
\label{sec:experiments}
We now present the experimental setup employed to empirically validate our theoretical findings by analysing the Uncertainty Span on a selection of \NumRobustBenchModels state-of-the-art models from the well-known Robustbench repository~\cite{croce2021robustbench}.
In addition, we also consider a concrete case study where an attacker is interested in tampering with the model's calibration, proving the robustness of adversarial trained models for a real scenario. 
The code for the reproducibility of the experiments is available at the following link: \url{https://github.com/pralab/UncertaintyAdversarialRobustness} 

\subsection{Experimental Setup}
\label{sec:experiments.setup}

\myparagraph{Datasets}
We consider two well-known datasets used for classification tasks, \cifar~\cite{krizhevsky2009learning}, and \imagenet~\cite{krizhevsky2012imagenet}. 
We generate 8,000 adversarial examples from each dataset by randomly sampling from their respective test sets.

\myparagraph{Models}
For a comprehensive investigation, we consider 15 \cifar (denoted as \engstromID-\pangrobustnessID) and 8 \imagenet (denoted as \engstromimagenetID-\liuconvnl) standard robust classifiers from the Robustbench repository \robustbench, encompassing many different model architectures and robust accuracies, 
which are summarized in~\autoref{tab:architectures}.
We consider a wide range of different network architectures, including Convolutional Neural Networks (CNN) and Visual Tranformers (ViT); specifically, for analysing the uncertainty robustness when the model's size changes, we selected a wide range of model capacity, including ResNet-18, ResNet-50, ResNet-152, WideResNet-28-10, WideResNet-70-10, RaWideResNet-28-10, Swin-B, Swin-L, ConvNeXt-B and ConvNeXt-L.
Many of them rely on data augmentation (\addepallieffID ~\addepallieff, \gowalimprovingID, \gowalimprovingIDdue ~\gowalimproving, \wangbetterID,\wangbetterIDdue ~\gowalimproving, \rebuffifixingID~\rebuffifixing, \liuswinb, \liuswinl, \liuconvnb, \liuconvnl~\liu) or transfer learning with synthetic data (\sehwagrobustID, \sehwagrobustIDdue ~\sehwagrobust, \salmanID, \salmanIDdue~\salman ), while only two classifiers perform adversarial training using solely the original set (\engstromID, \engstromimagenetID~\engstrom).
Some work proposes more sophisticated strategies such as: (i) weight averaging (\rebuffifixingID~\rebuffifixing, \liuswinb, \liuswinl, \liuconvnb, \liuconvnl~\liu); (ii) ODE networks (\kangstableID~\kangstable); (iii) robust residual blocks (\pengrobustID~\pengrobust); 
(iv) modifications on the optimization process (\addepallitowID~\addepallitow, \cuidecoupledID~\cuidecoupled, \xuexploringID~\xuexploring, \pangrobustnessID~\pangrobustness).

\begin{table}[!h]
    \centering
    \resizebox{0.9\linewidth}{!}{%
    \begin{tabular}{lll}
    \toprule
    \textbf{ID} & \textbf{RobustBench Name} & \textbf{Architecture} \\
    \midrule
    \midrule
    \multicolumn{3}{c}{\textbf{CIFAR-10}} \\ 
    \midrule
    \engstromID \engstrom & \engstromname & \engstromarch \\
    \rowcolor{lightgray}
    \addepallieffID \addepallieff & \addepallieffname & \addepallieffarch \\
    \gowalimprovingID \gowalimproving & \gowalimprovingname & \gowalimprovingarch \\
    \rowcolor{lightgray}
    \gowalimprovingIDdue \gowalimproving & \gowalimprovingduename & \gowalimprovingduearch \\
    \wangbetterID \wangbetter & \wangbettername & \wangbetterarch \\
    \rowcolor{lightgray}
    \wangbetterIDdue \wangbetter & \wangbetterduename & \wangbetterduearch \\
    \sehwagrobustID \sehwagrobust & \sehwagrobustname & \sehwagrobustarch \\
    \rowcolor{lightgray}
    \sehwagrobustIDdue \sehwagrobust & \sehwagrobustduename & \sehwagrobustduearch \\
    \rebuffifixingID \rebuffifixing & \rebuffifixingname & \rebuffifixingarch \\
    \rowcolor{lightgray}
    \kangstableID \kangstable & \kangstablename & \kangstablearch \\
    \pengrobustID \pengrobust & \pengrobustname & \pengrobustarch \\
    \rowcolor{lightgray}
    \addepallitowID \addepallitow & \addepallitowname & \addepallitowarch \\
    \cuidecoupledID \cuidecoupled & \cuidecoupledname & \cuidecoupledarch \\
    \rowcolor{lightgray}
    \xuexploringID \xuexploring & \xuexploringname & \xuexploringarch \\
    \pangrobustnessID \pangrobustness & \pangrobustnessname & \pangrobustnessarch \\
    \midrule
    \multicolumn{3}{c}{\textbf{ImageNet}} \\ 
    \midrule
    \rowcolor{lightgray}
    \engstromimagenetID \engstrom & \engstromimagenetname & \engstromimagenetarch \\
    \salmanID \salman & \salmanname & \salmanarch \\
    \rowcolor{lightgray}
    \salmanIDdue \salman & \salmanduename & \salmanduearch \\
    \wongfastID \wongfast & \wongfastname & \wongfastarch \\
    \rowcolor{lightgray}
    \liuswinb \liu & \liuswinbname & \liuswinbarch \\
    \liuswinl \liu & \liuswinlname & \liuswinlarch \\
    \rowcolor{lightgray}
    \liuconvnb \liu & \liuconvnbname & \liuconvnbarch \\
    \liuconvnl \liu & \liuconvnlname & \liuconvnlarch \\
    \bottomrule
    \end{tabular}%
    }
    \caption{The RobustBench models used for our analysis. For more details on the specific hyperparameters used during the adversarial training phase, we refer the reader to the original papers.}
    \label{tab:architectures}
\end{table}

\myparagraph{Adversarial setup}
We implement the under-confidence and over-confidence attacks by employing the cross-entropy loss between the perturbed input and the targets specified in ~\autoref{sec:method}.
We then leverage the Projected Gradient Descent (PGD) method \cite{madry2017towards} for crafting the adversarial perturbation.
When running the attack, we constrain the \linf norm of the adversarial perturbation with a budget $\epsilon$, which we set as $8/255$ for \cifar and $4/255$ for \imagenet, accordingly to their original RobustBench evaluation.
Lastly, we set the number of attack iterations to \atkit, which is slightly higher compared to standard approaches used in literature for traditional evasion attacks targeting the predictions~\cite{croce2021robustbench, croce2020reliable}. Moreover, we empirically observed that this is suitable as all example losses reached convergence.

\myparagraph{OOD and Open-Set setup}
\label{sec:eval-ood}
To demonstrate how the analysis of the uncertainty span can be applied to concrete uncertainty attack scenarios, we instantiate our framework in both out-of-distribution (OOD) and Open-Set recognition (OSR) settings.
As discussed in \autoref{sec:method.approaches} and first proposed in \cite{zeng2022outdomain}, these attacks are designed to disguise samples that are semantically outside the training distribution, making them appear indistinguishable from in-distribution data.
To this end, we conduct two types of experiments:
\begin{itemize}
\item OOD scenario: we use 800 samples from the \mnist~\cite{deng2012mnist} dataset, which are clearly outside the training distribution;
\item OSR scenario: we select 800 samples from 10 novel classes in the \cifarh dataset. These classes are visually similar to the original 10 \cifar classes but are not part of the training set, thus simulating an open-set scenario~\cite{scheirer2012openset}.
\end{itemize}
For each scenario, we construct two datasets composed of 8800 samples (8000 in-distribution and 800 outliers): (i) the unperturbed dataset, and (ii) the dataset in which the outliers are perturbed with an over-confidence attack.
Finally, we compute the entropy of all the samples in the dataset, which is used to discern in-distribution samples from the outliers.

\myparagraph{Evaluation metrics}
\label{sec:eval-metrics}
In our empirical evaluation, we use the \mus and \msus measures discussed in ~\autoref{sec:method}.
For the calibration case study, however, a standard calibration measure such as the ECE would capture the miscalibration error but not tell if the model is under- or over-confident.
Since our study deals with increasing and decreasing the confidence scores of the adversarial samples, it is crucial to use a measure that captures such a phenomenon.
Accordingly, we propose a \textit{signed} version of the Expected Calibration Error, which we refer to as \sece (\autoref{eq:sece}), by taking the signed subtraction between the expected and the observed probabilities (thus, without the absolute value used for the standard ECE).
\begin{equation}
\text{\sece} = \sum^S_{s=1} \frac{|\bucket{s}|}{|\set{D}|} ({ex}(\bucket{s})-{ob}(\bucket{s})) \,
\label{eq:sece}
\end{equation}
By doing so, negative \sece values correspond to over-confidence in the models' predictions, whereas positive values correspond to under-confidence.

For the OOD and OSR experiments, we follow common evaluation practices~\cite{yang2021generalized, yang2022openood} by measuring the AUROC, AUPR (IN/OUT), and FPR95TPR.

\subsection{Experimental Results}
\label{sec:experiments.results}
This section presents the experimental results of our evaluation, which we first show through our entropy distribution plots and reliability diagrams and then analyze and discuss via the proposed metrics. 

\myparagraph{Entropy distribution shift}
To evaluate the entropy of the models before and after the attacks, we show here the entropy distribution plots, which reveal substantial differences in how the different robust models react to the adversarial attacks.
Through~\autoref{fig:violin-cifar}, we first depict the entropy distributions of the different models before and after the attacks, highlighting, interestingly, a similar behavior across models.
\begin{figure}[!tb]
    \centering
    \includegraphics[width=1\linewidth]{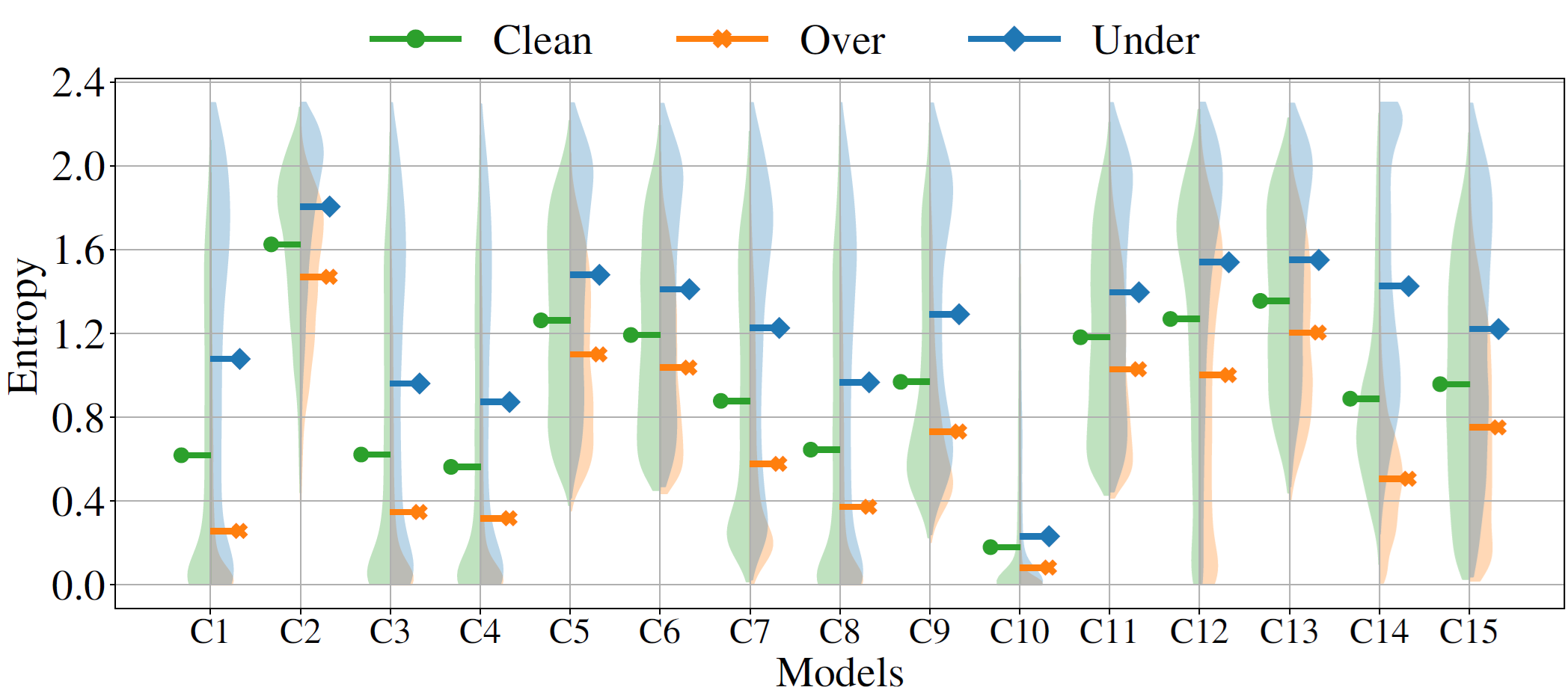}
    \caption{Entropy distribution before (green on the left) and after the over- (orange on the right) and under-confidence (blue on the right) attacks for all 15 \cifar robust classifiers, along with a line marking the mean of each distribution.}
    \label{fig:violin-cifar}
\end{figure}
In \engstromID, \gowalimprovingID, \gowalimprovingIDdue, and \sehwagrobustIDdue (which we denote with group \cifargroupA), the mass of the entropy distribution before the attack is primarily concentrated in low-entropy values, with a mode near $\sim 0.1$ and a mean of $\sim 0.6$.
The over-confidence attack primarily influences the samples having higher entropies, moving the mass toward the low-entropy mode.
The distribution after the under-confidence attack, instead, reveals a bimodal trend: apparently, some of the samples keep lower entropy values, while some others drastically move toward higher entropy regions; such a phenomenon generates two modes on the new distribution, one around $\sim 0.1$ and one near $\sim 1.6$.
\wangbetterID, \wangbetterIDdue, \pengrobustID, and \cuidecoupledID (group \cifargroupB) are characterized by a near-uniform distribution before the attack confined between $0.6$ and $1.8$.
After the attacks, the distributions gain a soft positive (for the over-confidence attack) and negative (for the under-confidence attack) skewness, shifting the mean of about $0.4$ from its clean value.
Even though we cannot group the remaining methods, some present the behaviors reported above, such as the bimodal distribution for the under-confidence distribution (\sehwagrobustID, \rebuffifixingID, \xuexploringID).
%
\addepallieffID and \kangstableID are the best-performing models: in \addepallieffID, the distribution softly shifts when under attack, while \kangstableID remains almost unchanged.
Interestingly, while \kangstableID presents very low clean entropy, \addepallieffID has one of the highest clean entropies among the models: such behavior reflects their low and high clean accuracies, respectively, reported on the RobustBench leaderboard~\cite{croce2021robustbench}.

Just like \cifar, in \imagenet, we report similarities across models, which allows us to divide them into groups.
\begin{figure}[!tb]
    \centering
    \includegraphics[width=1\linewidth]{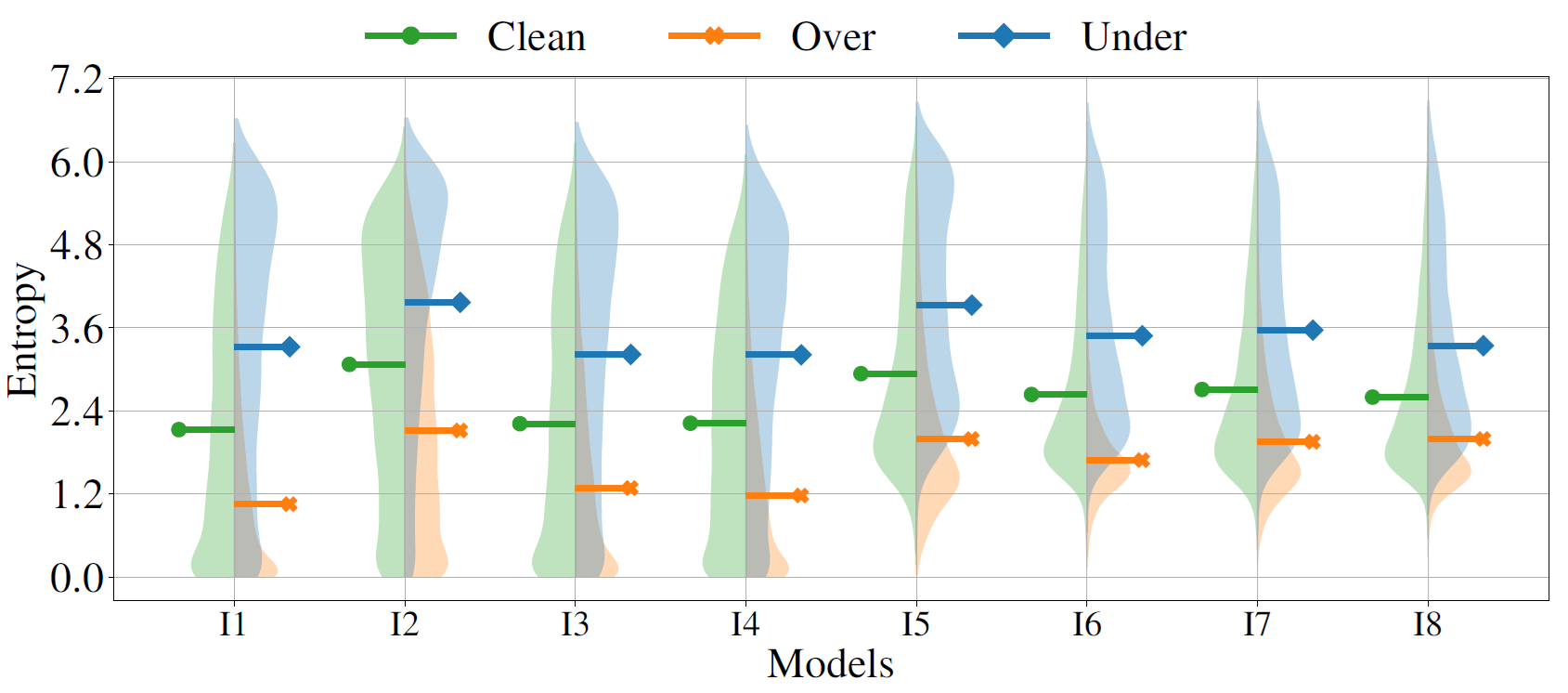}
    \caption{Entropy distribution before (green on the left) and after the over- (orange on the right) and under-confidence (blue on the right) attacks for all 8 \imagenet robust classifiers, along with a line marking the mean of each distribution.}
    \label{fig:violin-imagenet}
\end{figure}
The clean entropy distributions of \engstromimagenetID, \salmanIDdue, and \wongfastID (group \imagenetgroupA) present a mode of around $\sim 0.2$; however, a significant part of their mass is also distributed into a considerable interval of medium entropy levels, mainly between $1.0$ and $5.0$.
The over-confidence attacks successfully shift a significant part of the mass onto the mode around $\sim 0.2$.
After the under-confidence attack, similarly, a substantial mass shift, generating a mode around $\sim 5.2$.
The transformer architectures (\liuswinb, \liuswinl, \liuconvnb, and \liuconvnl) (group \imagenetgroupB) seem less sensitive to both attacks, particularly to over-confidence attacks.
Indeed, only \liuswinb presents a mode around higher entropy values, so we expect only \liuswinl, \liuconvnb, and \liuconvnl to be generally more robust than the remaining classifiers.

\myparagraph{Impact on the model's calibration}
We now show the effect of the over- and under-confidence attacks on the calibration of adversarial trained models.
\autoref{fig:calibration-cifar} and \autoref{fig:calibration-imagenet} show the reliability diagrams of the \NumRobustBenchModels robust models for both the analysed data sets, while \autoref{tab:sece_cifar} and \autoref{tab:sece_imagenet} report their \sece scores.
\begin{figure}[!tbp]
    \centering
    \includegraphics[width=\linewidth]{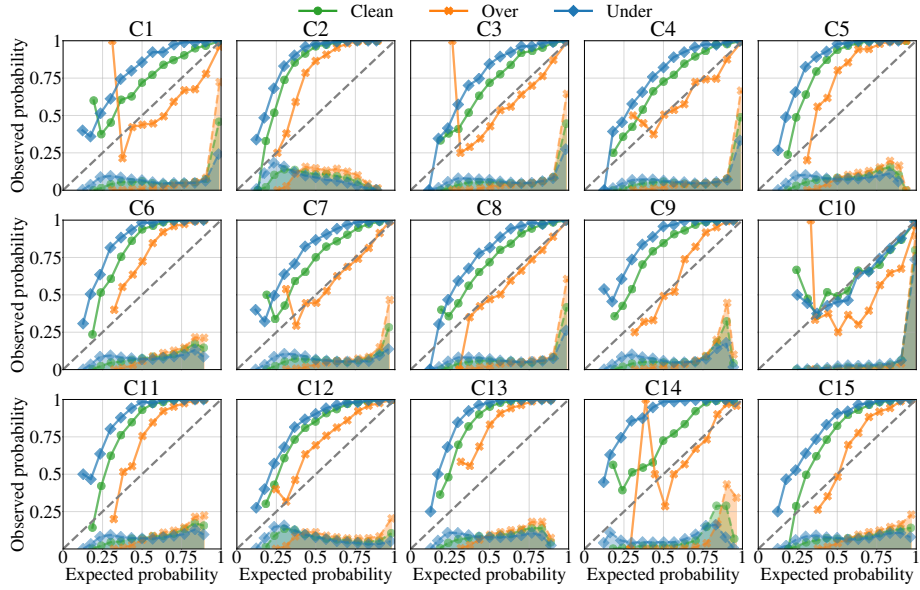}
    \caption{Calibration curves (above) and confidence histograms (below) for each \cifar model before (green) and after the over- (orange) and under-confidence (blue) attacks.}
    \label{fig:calibration-cifar}
\end{figure}
\begin{figure}[htbp]
    \centering
    \includegraphics[width=0.85\linewidth]{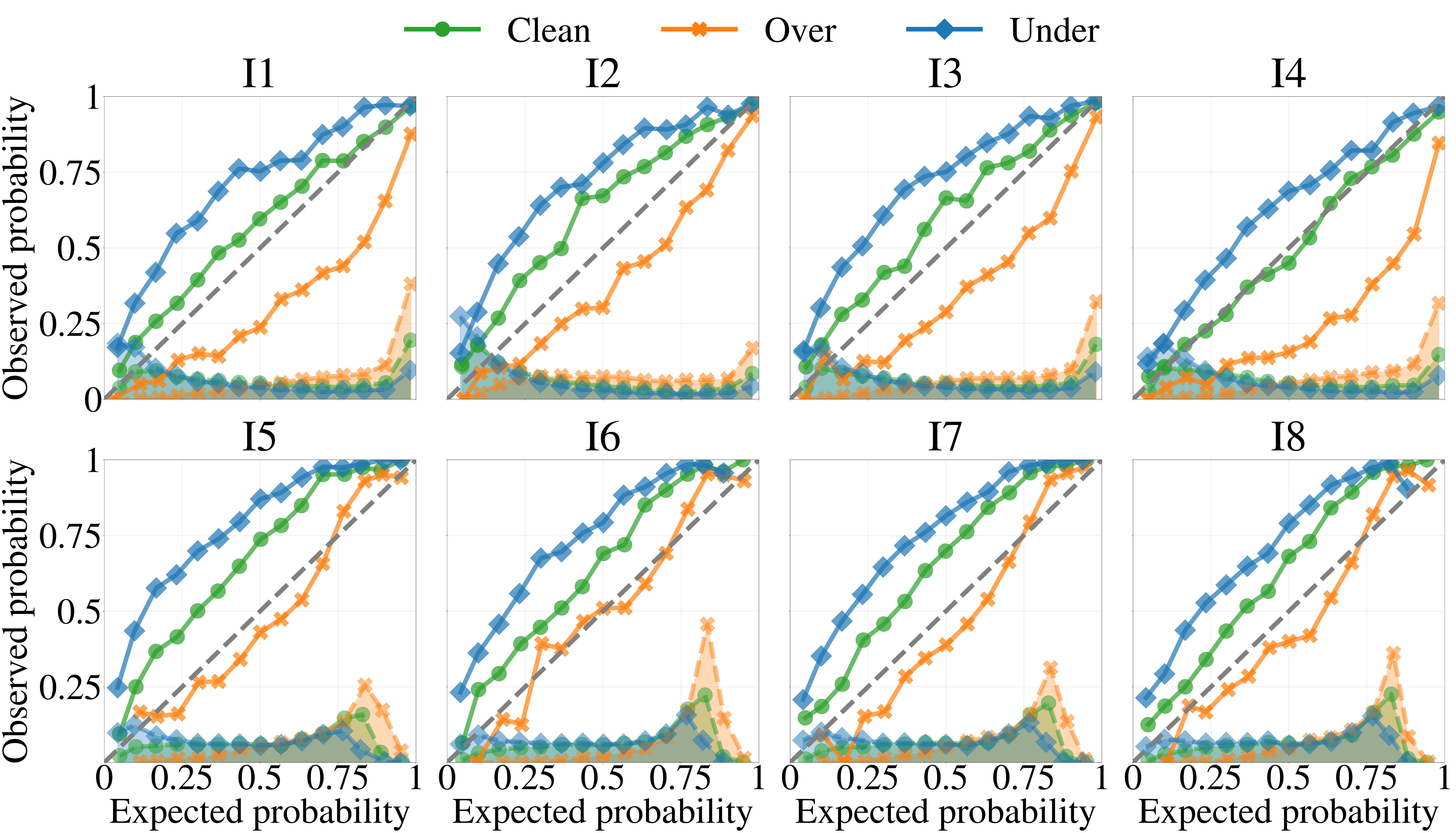}
    \caption{Calibration curves (above) and confidence histograms (below) for each \imagenet model before (green) and after the over- (orange) and under-confidence (blue) attacks.}
    \label{fig:calibration-imagenet}
\end{figure}
\begin{table}[!tbh]
\centering
\caption{\sece before and after the over- and under-confidence attacks for each \cifar model.}
\resizebox{\linewidth}{!}{
\setlength{\dashlinedash}{5pt}
\setlength{\dashlinegap}{3pt}
\begin{tabular}{lcccccccccccccccc}
\multicolumn{1}{c}{\textbf{\sece}}   & \multicolumn{1}{c}{\engstromID} & \multicolumn{1}{c}{\addepallieffID} & \multicolumn{1}{c}{\gowalimprovingID} & \multicolumn{1}{c}{\gowalimprovingIDdue} & \multicolumn{1}{c}{\wangbetterID} & \multicolumn{1}{c}{\wangbetterIDdue} & \multicolumn{1}{c}{\sehwagrobustID} & \multicolumn{1}{c}{\sehwagrobustIDdue} & \multicolumn{1}{c}{\rebuffifixingID} & \multicolumn{1}{c}{\kangstableID} & \multicolumn{1}{c}{\pengrobustID} & \multicolumn{1}{c}{\addepallitowID} & \multicolumn{1}{c}{\cuidecoupledID} & \multicolumn{1}{c}{\xuexploringID} & \multicolumn{1}{c}{\pangrobustnessID}\\ 
\toprule
 Under   & 0.22 & 0.44 & 0.19 & 0.17 & 0.37 & 0.39 & 0.25 & 0.19 &  0.30 & -0.01 & 0.34 & 0.33 & 0.39 & 0.33 & 0.26 \\
\cline{2-17}
 Clean   & 0.09 & 0.39 & 0.10 & 0.09 & 0.29 & 0.27 & 0.14 & 0.11 & 0.19 & -0.01 & 0.27 & 0.26 & 0.32 & 0.15 & 0.20\\
\cline{2-17}
 Over   & -0.06 & 0.29 & -0.02 & -0.01 & 0.21 & 0.20 & 0.00 & -0.01 & 0.09 & -0.05 & 0.19 & 0.13 & 0.24 & 0.04 & 0.10\\
\bottomrule
\end{tabular}}
\label{tab:sece_cifar}
\end{table}
\begin{table}[!tbh]
\centering
\caption{\sece before and after the over- and under-confidence attacks for each \imagenet model.}
\resizebox{0.65\linewidth}{!}{
\setlength{\dashlinegap}{3pt}
\begin{tabular}{lccccccccc}
\multicolumn{1}{c}{\textbf{\sece}} & \multicolumn{1}{c}{\engstromimagenetID} & \multicolumn{1}{c}{\salmanID} & \multicolumn{1}{c}{\salmanIDdue} & \multicolumn{1}{c}{\wongfastID} & \multicolumn{1}{c}{\liuswinb} & \multicolumn{1}{c}{\liuswinl} & \multicolumn{1}{c}{\liuconvnb} & \multicolumn{1}{c}{\liuconvnl} \\ 
\toprule
 Under   & 0.20 & 0.20 & 0.20 & 0.11 & 0.31 & 0.27 & 0.27 & 0.24 \\
\cline{2-10}
 Clean   & 0.06 & 0.11 & 0.08 & -0.01 & 0.19 & 0.16 & 0.17 & 0.16 \\
\cline{2-10}
 Over   & -0.20 & -0.12 & -0.15 & -0.28 & 0.01 & 0.07 & 0.01 & 0.02 \\
\bottomrule
\end{tabular}}
\label{tab:sece_imagenet}
\end{table}
The majority of the clean reliability curves are positioned above the identity line\footnote{Note that the spikes in models \engstromID, \gowalimprovingID, \kangstableID, and \xuexploringID do not significantly affect the \sece, due to the negligible impact of their small mass on the weighted sum.}, which empirically supports the findings of \autoref{theorem:adv_entropy_inequality}, \ie adversarial trained models tend to be under-confident.
The ``under-confident nature'' of adversarial trained models makes them naturally robust against over-confidence attacks (in line with our conjecture in ~\autoref{sec:theory}): indeed, as we can see from \autoref{tab:sece_cifar} and \autoref{tab:sece_imagenet}, even after the attack only a few \cifar models and half of the ImageNet models reach negative \sece values (\ie become over-confident).

\myparagraph{Discussion}
In \autoref{tab:leaderboard} we present a leaderboard ranking models based on their robustness against uncertainty attacks, sorted by \mus.
For each of the reported values we also show the standard deviation computed on the \mus and \msus distributions, for strengthen the statistical significance of the results.
\begin{table}[!htb]
\centering
\caption{Mean ($\mu$) and standard deviation ($\sigma$) of the \textbf{\mus} and \textbf{\msus} of each \cifar and \imagenet model (sorted by $\textbf{\mus}^\dag$), along with their accuracies and RobustBench-based rank.
Denoted with * we report potentially unreliable values, as discussed in \autoref{sec:experiments}.
} 
\resizebox{0.99\linewidth}{!}{ 
\begin{tabular}{clrccccccc} 
\label{tab:leaderboard} 
&  &  & \multicolumn{2}{c}{\textbf{Accuracy}} & \multicolumn{2}{c}{$\textbf{\mus}^\dag$} & \multicolumn{2}{c}{\textbf{\msus}} \\
\cmidrule(lr){4-5} \cmidrule(lr){6-7} \cmidrule(lr){8-9}
& \textbf{Robustbench ID}& \textbf{Model ID} & \textbf{Clean} & \textbf{Robust} & $\textbf{\vct  \mu}^\dag$ & \textbf{$\vct \sigma$} & \textbf{$\vct  \mu$} & \textbf{$\vct  \sigma$} & \textbf{RB rank}   \\ \hline

    \multirow{15}{*}{\rotatebox{90}{\large \cifar}}& Kang2021Stable  & \kangstableID \kangstable        & $93.73\%$ & $64.20\%$ & $0.149^*$ & $0.267^*$ & $0.094^*$ & $0.260^*$ & $7$  \\  
    \rowcolor{lightgray}
    \cellcolor{white} & Addepalli2022Efficient\_RN18 & \addepallieffID \addepallieff                                      & $85.71\%$ & $52.48\%$ & $0.335$ & $0.120$ & $0.127$ & $0.096$ & $13$  \\ 
    & Cui2023Decoupled\_WRN-28-10 \cifargroupB & \cuidecoupledID \cuidecoupled                          & $92.16\%$ & $67.73\%$ & $0.347$ & $0.168$ & $0.148$ & $0.152$ & $3$   \\
    \rowcolor{lightgray}
    \cellcolor{white} & Peng2023Robust \cifargroupB & \pengrobustID \pengrobust                                           & $93.27\%$ & $71.07\%$ & $0.368$ & $0.208$ & $0.178$ & $0.200$ & $1$   \\ 
    & Wang2023Better\_WRN-70-16 \cifargroupB & \wangbetterIDdue \wangbetter                             & $93.25\%$ & $70.69\%$ & $0.374$ & $0.208$ & $0.183$ & $0.201$ & $2$   \\
    & \cellcolor{lightgray}Wang2023Better\_WRN-28-10 \cifargroupB & \cellcolor{lightgray}\wangbetterID \wangbetter                                & \cellcolor{lightgray}$92.44\%$ & \cellcolor{lightgray}$67.31\%$ & \cellcolor{lightgray}$0.380$ & \cellcolor{lightgray}$0.192$ & \cellcolor{lightgray}$0.181$ & \cellcolor{lightgray}$0.188$ & \cellcolor{lightgray}$4$   \\ 
    & Pang2022Robustness\_WRN70\_16 \cifargroupAvar & \pangrobustnessID \pangrobustness                 & $89.01\%$ & $63.35\%$ & $0.470$ & $0.221$ & $0.270$ & $0.248$ & $10$  \\
    & \cellcolor{lightgray}Addepalli2021Towards\_RN18 & \cellcolor{lightgray}\addepallitowID \addepallitow                                        & \cellcolor{lightgray}$80.24\%$ & \cellcolor{lightgray}$51.06\%$ & \cellcolor{lightgray}$0.539$ & \cellcolor{lightgray}$0.293$ & \cellcolor{lightgray}$0.376$ & \cellcolor{lightgray}$0.398$ & \cellcolor{lightgray}$14$  \\ 
    & Gowal2021Improving\_70\_16\_ddpm\_100m \cifargroupA & \gowalimprovingIDdue \gowalimproving        & $88.74\%$ & $66.10\%$ & $0.555$ & $0.468$ & $0.527$ & $0.706$ & $6$   \\
    \rowcolor{lightgray}
    \cellcolor{white} & Rebuffi2021Fixing\_70\_16\_cutmix\_extra & \rebuffifixingID \rebuffifixing                        & $92.23\%$ & $66.56\%$ & $0.560$ & $0.339$ & $0.428$ & $0.441$ & $5$   \\ 
    & Sehwag2021Proxy\_ResNest152 \cifargroupA& \sehwagrobustIDdue \sehwagrobust                        & $87.30\%$ & $62.79\%$ & $0.594$ & $0.443$ & $0.550$ & $0.660$ & $11$  \\
    \rowcolor{lightgray}
    \cellcolor{white} & Gowal2021Improving\_28\_10\_ddpm\_100m \cifargroupA & \gowalimprovingID \gowalimproving           & $87.50\%$ & $63.38\%$ & $0.614$ & $0.470$ & $0.597$ & $0.738$ & $9$   \\ 
    & Sehwag2021Proxy\_R18 & \sehwagrobustID \sehwagrobust                                              & $84.59\%$ & $55.54\%$ & $0.649$ & $0.394$ & $0.576$ & $0.601$ & $12$  \\
    \rowcolor{lightgray}
    \cellcolor{white} & Engstrom2019Robustness \cifargroupA & \engstromID \engstrom                                       & $87.03\%$ & $49.25\%$ & $0.821$ & $0.579$ & $1.009$ & $1.063$ & $15$  \\ 
    & Xu2023Exploring\_WRN-28-10 & \xuexploringID \xuexploring                                          & $93.69\%$ & $63.89\%$ & $0.920$ & $0.551$ & $1.149$ & $1.216$ & $8$   \\
\midrule
    \rowcolor{lightgray}
    \cellcolor{white}\multirow{8}{*}{\rotatebox{90}{\large \imagenet}}&
    
      Liu2023Comprehensive\_ConvNeXt-L \imagenetgroupB & \liuconvnl \liu                                & $78.02\%$ & $58.48\%$  & $1.348$ & $0.936$ & $2.692$ & $3.558$ & $2$  \\
    & Liu2023Comprehensive\_ConvNeXt-B \imagenetgroupB & \liuconvnb \liu                                & $76.02\%$ & $55.82\%$  & $1.613$ & $1.020$ & $3.640$ & $4.311$ & $4$  \\ 
    & \cellcolor{lightgray}Liu2023Comprehensive\_Swin-L \imagenetgroupB & \cellcolor{lightgray}\liuswinl \liu                                     & \cellcolor{lightgray} \cellcolor{lightgray}$78.92\%$ & \cellcolor{lightgray}$59.56\%$  & \cellcolor{lightgray}$1.797$ & \cellcolor{lightgray}$1.304$ & \cellcolor{lightgray}$4.930$ & \cellcolor{lightgray}$6.075$ & \cellcolor{lightgray}$1$  \\
    & Salman2020Do\_R18 & \salmanID \salman                                                             & $52.92\%$ & $25.32\%$  & $1.850$ & $1.070$ & $4.568$ & $4.910$ & $8$  \\ 
    & \cellcolor{lightgray}Salman2020Do\_R50 \imagenetgroupA & \cellcolor{lightgray}\salmanIDdue \salman                                          & \cellcolor{lightgray}$64.02\%$ & \cellcolor{lightgray}$34.96\%$  &\cellcolor{lightgray} $1.928$ &\cellcolor{lightgray} $1.274$ & \cellcolor{lightgray}$5.341$ & \cellcolor{lightgray}$5.998$ & \cellcolor{lightgray}$5$  \\
    & Liu2023Comprehensive\_Swin-B \imagenetgroupBvar & \liuswinb \liu                                  & $76.16\%$ & $56.16\%$  & $1.934$ & $1.062$ & $4.867$ & $5.064$ & $3$  \\ 
    \rowcolor{lightgray}
    \cellcolor{white}& Wong2020Fast \imagenetgroupA & \wongfastID \wongfast                                              & $55.62\%$ & $26.24\%$  & $2.033$ & $1.243$ & $5.679$ & $5.862$ & $7$  \\
    & Engstrom2019Robustness \imagenetgroupA & \engstromimagenetID \engstrom                                      & $62.56\%$ & $29.22\%$  & $2.272$ & $1.477$ & $7.346$ & $7.700$ & $6$  \\ 

\bottomrule
\end{tabular} } 

\end{table} 

Here, the results reflect well our analysis: the group \cifargroupA comes after group \cifargroupB; the same applies for ImageNet, where almost all models from group \imagenetgroupA come after, except I5, the transformers (group \imagenetgroupB). 
Generally speaking - except for some outliers - we observe a slight correlation between robust accuracy and robust uncertainty: disregarding C10 and C2, group \cifargroupB has (on average) lower MUS and MSUS than group \cifargroupA, while the uncategorized models perform similarly to \cifargroupB; at the same time - without taking I5 into account - the Transformers outperform the ResNet architectures.
Nevertheless, the presence of \addepallieffID in the second place reveals that high entropic models may eventually disclose general robust properties against uncertainty attacks.

It is also important to note that \autoref{tab:leaderboard} reports the rankings obtained by RobustBench ordered by the best known robust accuracy.
Most of these robust accuracies are obtained by using AutoAttack, which is an ensemble of gradient-based white- and black-box attacks.
However, for certain models, even AutoAttack reveals to be unreliable: this is the case of \kangstableID, for which the best known robust accuracy was obtained by running a transfer attack.
In fact, methods that make use of Neural ODE in their architecture like \kangstableID are known to be affected by obfuscated gradients \cite{kang2021stable, Huang2021Adversarial}.
This phenomenon hinders the effectiveness of gradient-based attacks, leading to unreliable robustness evaluations~\cite{athalye2018obfuscated}.
Accordingly, since our evaluation consists of gradient-based attacks, we mark the results obtained with \kangstableID as unreliable, noting that our result may change by employing different attack strategies, \eg attack transferability, as suggested in \cite{kang2021stable, Huang2021Adversarial}.
%

\myparagraph{OOD and Open-Set results}
\autoref{tab:ood_openset} reports the results of the experiments for the OOD and OSR scenarios. 
\begin{table}[!htb]
    \centering
    \resizebox{0.99\linewidth}{!}{\begin{tabular}{cccccccccc}
    & & \multicolumn{4}{c}{\textbf{OSR}} & \multicolumn{4}{c}{\textbf{OOD}}  \\
    \cmidrule(lr){3-6} \cmidrule(lr){7-10}
    \textbf{Model ID} &      \textbf{$\epsilon$} &  \textbf{AUROC} &       \textbf{AUPR-IN}     &      \textbf{AUPR-OUT}     & \textbf{FPR95TPR} & \textbf{AUROC} &       \textbf{AUPR-IN}     &      \textbf{AUPR-OUT}     & \textbf{FPR95TPR}\\
    \midrule
    \rowcolor{white}
    \rebuffifixingID     & clean &      0.818 &             0.977 &         0.296 &         0.589 &          0.724 &                     0.963 &             0.189 &             0.710 \\
    \rowcolor{lightgray}
    \rebuffifixingID     & 4/255 &      0.741 &             0.966 &         0.175 &         0.695 &          0.681 &                     0.956 &             0.151 &             0.770 \\
    \rowcolor{white}
    \rebuffifixingID     & 8/255 &      0.642 &             0.951 &         0.116 &         0.782 &          0.635 &                     0.948 &             0.122 &             0.816 \\
    \midrule
    \rowcolor{lightgray}
    \wangbetterID        & clean &      0.823 &             0.977 &         0.318 &         0.596 &          0.668 &                     0.957 &             0.124 &             0.729 \\
    \rowcolor{white}
    \wangbetterID        & 4/255 &      0.750 &             0.967 &         0.191 &         0.687 &          0.635 &                     0.951 &             0.112 &             0.772 \\
    \rowcolor{lightgray}
    \wangbetterID        & 8/255 &      0.655 &             0.953 &         0.123 &         0.758 &          0.599 &                     0.944 &             0.102 &             0.805 \\
    \midrule
    \rowcolor{white}
    \cuidecoupledID      & clean &      0.816 &             0.976 &         0.298 &         0.602 &          0.602 &                     0.943 &             0.104 &             0.823 \\
    \rowcolor{lightgray}
    \cuidecoupledID      & 4/255 &      0.744 &             0.966 &         0.182 &         0.699 &          0.572 &                     0.937 &             0.096 &             0.858 \\
    \rowcolor{white}
    \cuidecoupledID      & 8/255 &      0.651 &             0.952 &         0.120 &         0.758 &          0.541 &                     0.930 &             0.090 &             0.885 \\
    \midrule
    \rowcolor{lightgray}
    \wangbetterIDdue     & clean &      0.836 &             0.979 &         0.334 &         0.571 &          0.651 &                     0.952 &             0.118 &             0.793 \\
    \rowcolor{white}
    \wangbetterIDdue     & 4/255 &      0.767 &             0.969 &         0.204 &         0.666 &          0.621 &                     0.946 &             0.109 &             0.828 \\
    \rowcolor{lightgray}
    \wangbetterIDdue     & 8/255 &      0.674 &             0.956 &         0.130 &         0.729 &          0.589 &                     0.940 &             0.101 &             0.858 \\
    \midrule
    \rowcolor{white}
    \gowalimprovingIDdue & clean &      0.830 &             0.980 &         0.296 &         0.513 &          0.777 &                     0.974 &             0.200 &             0.530 \\
    \rowcolor{lightgray}
    \gowalimprovingIDdue & 4/255 &      0.764 &             0.971 &         0.186 &         0.607 &          0.762 &                     0.972 &             0.184 &             0.549 \\
    \rowcolor{white}
    \gowalimprovingIDdue & 8/255 &      0.676 &             0.958 &         0.127 &         0.693 &          0.745 &                     0.969 &             0.170 &             0.570 \\
    \midrule
    \rowcolor{lightgray}
    \xuexploringID       & clean &      0.840 &             0.975 &         0.391 &         0.695 &          0.943 &                     0.994 &             0.614 &             0.206 \\
    \rowcolor{white}
    \xuexploringID       & 4/255 &      0.587 &             0.915 &         0.116 &         0.989 &          0.930 &                     0.993 &             0.542 &             0.244 \\
    \rowcolor{lightgray}
    \xuexploringID       & 8/255 &      0.348 &             0.851 &         0.065 &         1.000 &          0.914 &                     0.991 &             0.465 &             0.274 \\
    \bottomrule
    \end{tabular}}

    \caption{AUROC, AUPR and FPR95TPR computed on a mixture of ID-OSR (on the left) and ID-OOD (on the right) samples, for six Robust WideResNets, before and after the under-confidence attacks on the OOD and OSR subsets. }
    \label{tab:ood_openset}
\end{table}
%
As we can see from the results, all the considered metrics deteriorate in the presence of an attacker.
Nevertheless, even though such deterioration can vary on the base of the considered model, we notice that it does not lead to a complete break, as it happens for undefended models, where even for small perturbation budget it is possible to make the uncertainty measure on the OOD lower than any ID sample~\cite{Ledda_2023_ICCV}; this is a clear indicator of robustness for adversarially trained models against OOD and Open Set uncertainty attacks.
%
Quantitatively, we also notice that in almost all the cases, the adversarially trained models are able to discriminate better between open set samples compared to OOD samples.
The aleatoric nature of the considered uncertainty measure can be a possible explanation of this phenomenon.
Indeed, we remark that the entropy is an aleatoric measure, which is based upon the output prediction vector; notably, OOD represents an emblematic case of epistemic uncertainty, because, by definition, the considered model has no prior knowledge of the samples which are far from the training distribution. Instead, for the open set case, the lack of knowledge is less evident: even though the selected classes from \cifarh were not present in the training data, some visual features may possibly be shared across classes (\eg features from the ``helicopter'' class may be present in some \cifar veichle classes su as the ``airplane'' class).

Interestingly, we notice that \gowalimprovingIDdue and \xuexploringID are more sensitive to the open set attacks compared to the remaining ones.
In particular, \xuexploringID suffers from an extreme deterioration under attack in the open set case (\eg AUROC from $0.840$ ot $0.348$), which is in line with the ID leaderboard results in~\autoref{tab:leaderboard}.


%
\section{Conclusions, Limitations and Future Works}
\label{sec:conclusions}
In this work, we provided an extensive analysis of the robustness of adversarial-trained models against uncertainty attacks.
Our experimental evaluation conducted on image classification benchmarks has empirically validated our hypothesis that instead of needing an ad hoc defense strategy for uncertainty attacks, one can rely on already trained state-of-the-art robust models.
Nevertheless, we point out some limitations of our work, which we consider promising starting points for future research.
First, while our theoretical analysis reveals the interesting behaviour of adversarial trained models being underconfident by nature, this is still limited to 2-class cases and assumes the model finds a theoretical equilibrium.
Nevertheless, as happens often in Machine Learning~\cite{madry2017towards}, it is important to notice that all the discussed theoretical results should be taken not as an ultimate theory, but rather as a compass that can guide the research process toward conclusions which are interesting from an engineering perpective, even if the theoretical assumptions do not perfectly reflect the complexity of the real world.
Second, the attack setup that is shared across all considered models is not intended to be exhaustive from the adversarial perspective. For instance, we discussed in \autoref{sec:experiments.results} how \kangstable needs further investigation using more advanced techniques to properly assess its robustness~\cite{kang2021stable, Huang2021Adversarial, athalye2018obfuscated}.
To this end, our proposed leaderboard should not be used as the ultimate benchmark, but rather as a general compass that offers valuable insights on the uncertainty robustness capabilities of adversarial trained models.
Third, we prioritized our analysis to the sole aleatoric uncertainty, as previous research on adversarial robustness led to a plethora of pre-trained robust models that are deterministic by design, hence with no epistemic uncertainty available~\robustbench.
Accordingly, as future research, we intend to extend such analysis to epistemic uncertainty as well, particularly for proper Bayesian models, by employing seminal promising solutions that integrate adversarial training on top of Bayesian Neural Networks~\cite{carbone2021bayesianrobustness}.
On a closing note, we believe that our study represents a first step toward understanding the robustness of standard benchmark models under adversarial perturbations targeting uncertainty quantification and their impact on uncertainty estimates. The insights obtained open up promising directions for transferring these findings to emerging areas in AI, such as generative and foundation models, as well as to safety-critical domains like autonomous driving and healthcare. 


%
%
\section*{Acknowledgments}
\label{sec:ack}
This work has been supported by the European Union's Horizon Europe research and innovation program under the projects ELSA (grant no. 101070617) and CoEvolution (grant no. 101168560); by Fondazione di Sardegna under the project ``TrustML: Towards Machine Learning that Humans Can Trust’’, CUP: F73C22001320007; by EU-NGEU National Sustainable Mobility Center (CN00000023) Italian MUR Decree n. 1033—17/06/2022 (Spoke 10); and by projects SERICS (PE00000014) and FAIR (PE00000013) under the NRRP MUR program funded by the EU-NGEU.
This work was conducted while Emanuele Ledda, Daniele Angioni, and Giorgio Piras were enrolled in the Italian National Doctorate on Artificial Intelligence run by Sapienza University of Rome in collaboration with the University of Cagliari.





\bibliographystyle{elsarticle-num} 
\bibliography{main}

\begin{thebibliography}{10}
\expandafter\ifx\csname url\endcsname\relax
  \def\url#1{\texttt{#1}}\fi
\expandafter\ifx\csname urlprefix\endcsname\relax\def\urlprefix{URL }\fi
\expandafter\ifx\csname href\endcsname\relax
  \def\href#1#2{#2} \def\path#1{#1}\fi

\bibitem{hullermeier2021aleatoric}
E.~H{\"u}llermeier, W.~Waegeman, Aleatoric and epistemic uncertainty in machine learning: An introduction to concepts and methods, Machine learning 110~(3) (2021) 457--506.

\bibitem{McAllister2017Autonomous}
R.~McAllister, Y.~Gal, A.~Kendall, M.~van~der Wilk, A.~Shah, R.~Cipolla, A.~Weller, Concrete problems for autonomous vehicle safety: Advantages of bayesian deep learning, in: {IJCAI}, 2017, pp. 4745--4753.

\bibitem{guo2024uctnet}
X.~Guo, X.~Lin, X.~Yang, L.~Yu, K.~Cheng, Z.~Yan, Uctnet: Uncertainty-guided cnn-transformer hybrid networks for medical image segmentation, Pattern Recognit. 152 (2024) 110491.

\bibitem{wei2025fingrained}
J.~Wei, G.~Wang, S.~Zhang, Fine-grained medical image out-of-distribution detection through multi-view feature uncertainty and adversarial sample generation, Pattern Recognit. 162 (2025) 111401.

\bibitem{galil2021disrupting}
I.~Galil, R.~El-Yaniv, Disrupting deep uncertainty estimation without harming accuracy, in: Advances in Neural Information Processing Systems, Vol.~34, 2021, pp. 21285--21296.

\bibitem{zeng2022outdomain}
H.~Zeng, Z.~Yue, Y.~Zhang, Z.~Kou, L.~Shang, D.~Wang, On attacking out-domain uncertainty estimation in deep neural networks, in: {IJCAI}, 2022, pp. 4893--4899.

\bibitem{Ledda_2023_ICCV}
E.~Ledda, D.~Angioni, G.~Piras, G.~Fumera, B.~Biggio, F.~Roli, Adversarial attacks against uncertainty quantification, in: IEEE/CVF Int. Conf. on Computer Vision Workshops, 2023, pp. 4599--4608.

\bibitem{obadinma2024calibration}
S.~Obadinma, X.~Zhu, H.~Guo, Calibration attack: A framework for adversarial attacks targeting calibration, arXiv:2401.02718 (2024).

\bibitem{szegedy2014intriguing}
C.~Szegedy, W.~Zaremba, I.~Sutskever, J.~Bruna, D.~Erhan, I.~J. Goodfellow, R.~Fergus, Intriguing properties of neural networks, in: {ICLR}, 2014.

\bibitem{biggio2013evasion}
B.~Biggio, I.~Corona, D.~Maiorca, B.~Nelson, N.~Srndic, P.~Laskov, G.~Giacinto, F.~Roli, Evasion attacks against machine learning at test time, in: {ECML/PKDD}, Vol. 8190, Springer, 2013, pp. 387--402.

\bibitem{biggio2018wild}
B.~Biggio, F.~Roli, Wild patterns: Ten years after the rise of adversarial machine learning, Pattern Recognition 84 (2018) 317--331.

\bibitem{madry2017towards}
A.~Madry, A.~Makelov, L.~Schmidt, D.~Tsipras, A.~Vladu, Towards deep learning models resistant to adversarial attacks, in: {ICLR} (Poster), OpenReview.net, 2018.

\bibitem{croce2021robustbench}
F.~Croce, M.~Andriushchenko, V.~Sehwag, E.~Debenedetti, N.~Flammarion, M.~Chiang, P.~Mittal, M.~Hein, Robustbench: a standardized adversarial robustness benchmark, in: NeurIPS Datasets and Benchmarks, Vol.~1, 2021.

\bibitem{guo2017calibration}
C.~Guo, G.~Pleiss, Y.~Sun, K.~Q. Weinberger, On calibration of modern neural networks, in: Int. Conf. on machine learning, PMLR, 2017, pp. 1321--1330.

\bibitem{Blasiok2023}
J.~Blasiok, P.~Gopalan, L.~Hu, P.~Nakkiran, When does optimizing a proper loss yield calibration?, in: Advances in Neural Information Processing Systems, NeurIPS, 2023.

\bibitem{krizhevsky2009learning}
A.~Krizhevsky, G.~Hinton, et~al., Learning multiple layers of features from tiny images (2009).

\bibitem{krizhevsky2012imagenet}
A.~Krizhevsky, I.~Sutskever, G.~E. Hinton, Imagenet classification with deep convolutional neural networks, Advances in neural information processing systems 25 (2012).

\bibitem{addepalli2022efficient}
S.~Addepalli, S.~Jain, et~al., Efficient and effective augmentation strategy for adversarial training, Advances in Neural Information Processing Systems 35 (2022) 1488--1501.

\bibitem{gowal2021improving}
S.~Gowal, S.-A. Rebuffi, O.~Wiles, F.~Stimberg, D.~A. Calian, T.~A. Mann, Improving robustness using generated data, Advances in Neural Information Processing Systems 34 (2021) 4218--4233.

\bibitem{rebuffi2021fixing}
S.-A. Rebuffi, S.~Gowal, D.~A. Calian, F.~Stimberg, O.~Wiles, T.~Mann, Fixing data augmentation to improve adversarial robustness, arXiv:2103.01946 (2021).

\bibitem{liu2023comprehensive}
C.~Liu, Y.~Dong, W.~Xiang, X.~Yang, H.~Su, J.~Zhu, Y.~Chen, Y.~He, H.~Xue, S.~Zheng, A comprehensive study on robustness of image classification models: Benchmarking and rethinking, Int. J. Comput. Vis. 133~(2) (2025) 567--589.

\bibitem{sehwag2021robust}
V.~Sehwag, S.~Mahloujifar, T.~Handina, S.~Dai, C.~Xiang, M.~Chiang, P.~Mittal, Robust learning meets generative models: Can proxy distributions improve adversarial robustness?, in: {ICLR}, OpenReview.net, 2022.

\bibitem{salman2020adversarially}
H.~Salman, A.~Ilyas, L.~Engstrom, A.~Kapoor, A.~Madry, Do adversarially robust imagenet models transfer better?, Advances in Neural Information Processing Systems 33 (2020) 3533--3545.

\bibitem{robustness}
L.~Engstrom, A.~Ilyas, H.~Salman, S.~Santurkar, D.~Tsipras, \href{https://github.com/MadryLab/robustness}{Robustness (python library)} (2019).
\newline\urlprefix\url{https://github.com/MadryLab/robustness}

\bibitem{kang2021stable}
Q.~Kang, Y.~Song, Q.~Ding, W.~P. Tay, Stable neural ode with lyapunov-stable equilibrium points for defending against adversarial attacks, Advances in Neural Information Processing Systems (2021) 14925--14937.

\bibitem{peng2023robust}
S.~Peng, W.~Xu, C.~Cornelius, M.~Hull, K.~Li, R.~Duggal, M.~Phute, J.~Martin, D.~H. Chau, Robust principles: Architectural design principles for adversarially robust cnns, in: {BMVC}, {BMVA} Press, 2023, pp. 739--740.

\bibitem{addepalli2022scaling}
S.~Addepalli, S.~Jain, G.~Sriramanan, R.~Venkatesh~Babu, Scaling adversarial training to large perturbation bounds, in: European Conf. on Computer Vision, Springer, 2022, pp. 301--316.

\bibitem{cui2023decoupled}
J.~Cui, Z.~Tian, Z.~Zhong, X.~Qi, B.~Yu, H.~Zhang, Decoupled kullback-leibler divergence loss, in: NeurIPS, 2024.

\bibitem{xu2023exploring}
Y.~Xu, Y.~Sun, M.~Goldblum, T.~Goldstein, F.~Huang, Exploring and exploiting decision boundary dynamics for adversarial robustness, in: {ICLR}, OpenReview.net, 2023.

\bibitem{pang2022robustness}
T.~Pang, M.~Lin, X.~Yang, J.~Zhu, S.~Yan, Robustness and accuracy could be reconcilable by (proper) definition, in: Int. Conf. on Machine Learning, PMLR, 2022, pp. 17258--17277.

\bibitem{wang2023better}
Z.~Wang, T.~Pang, C.~Du, M.~Lin, W.~Liu, S.~Yan, Better diffusion models further improve adversarial training, in: Int. Conf. on Machine Learning, PMLR, 2023, pp. 36246--36263.

\bibitem{wong2020fast}
E.~Wong, L.~Rice, J.~Z. Kolter, Fast is better than free: Revisiting adversarial training, in: {ICLR}, OpenReview.net, 2020.

\bibitem{croce2020reliable}
F.~Croce, M.~Hein, Reliable evaluation of adversarial robustness with an ensemble of diverse parameter-free attacks, in: {ICML}, Vol. 119 of Proceedings of Machine Learning Research, {PMLR}, 2020, pp. 2206--2216.

\bibitem{deng2012mnist}
L.~Deng, The mnist database of handwritten digit images for machine learning research, IEEE Signal Processing Magazine 29~(6) (2012) 141--142.

\bibitem{scheirer2012openset}
W.~J. Scheirer, A.~de~Rezende~Rocha, A.~Sapkota, T.~E. Boult, Toward open set recognition, {IEEE} Trans. Pattern Anal. Mach. Intell. 35~(7) (2013) 1757--1772.

\bibitem{yang2021generalized}
J.~Yang, K.~Zhou, Y.~Li, Z.~Liu, Generalized out-of-distribution detection: {A} survey, Int. J. Comput. Vis. 132~(12) (2024) 5635--5662.

\bibitem{yang2022openood}
J.~Yang, P.~Wang, D.~Zou, Z.~Zhou, K.~Ding, W.~Peng, H.~Wang, G.~Chen, B.~Li, Y.~Sun, X.~Du, K.~Zhou, W.~Zhang, D.~Hendrycks, Y.~Li, Z.~Liu, Openood: Benchmarking generalized out-of-distribution detection, in: NeurIPS, 2022.

\bibitem{Huang2021Adversarial}
Y.~Huang, Y.~Yu, H.~Zhang, Y.~Ma, Y.~Yao, Adversarial robustness of stabilized neural {ODE} might be from obfuscated gradients, in: {MSML}, Vol. 145 of Proceedings of Machine Learning Research, {PMLR}, 2021, pp. 497--515.

\bibitem{athalye2018obfuscated}
A.~Athalye, N.~Carlini, D.~A. Wagner, Obfuscated gradients give a false sense of security: Circumventing defenses to adversarial examples, in: {ICML}, Vol.~80 of Proceedings of Machine Learning Research, {PMLR}, 2018, pp. 274--283.

\bibitem{carbone2021bayesianrobustness}
G.~Carbone, M.~Wicker, L.~Laurenti, A.~Patan{\'{e}}, L.~Bortolussi, G.~Sanguinetti, Robustness of bayesian neural networks to gradient-based attacks, in: NeurIPS, 2020.

\end{thebibliography}





\end{document}